\documentclass{article}

\usepackage[preprint]{neurips_2026}

\usepackage[utf8]{inputenc} 
\usepackage[T1]{fontenc}    
\usepackage[
    colorlinks=true,
    linkcolor=blue,
    citecolor=blue,
    urlcolor=blue,
    filecolor=blue,
    bookmarks=true,
    bookmarksopen=true,
    bookmarksnumbered=true,
    pdfstartview=FitH,
    breaklinks=true
]{hyperref}
\usepackage{url}            
\usepackage{booktabs}       
\usepackage{amsfonts}       
\usepackage{nicefrac}       
\usepackage{microtype}      
\usepackage{xcolor}         

\usepackage{amsmath}
\usepackage{amssymb}
\usepackage{mathtools}
\usepackage{amsthm}
\usepackage{framed}
\usepackage{pifont}
\usepackage{microtype}
\usepackage{graphicx}
\usepackage{subcaption}
\usepackage{tabularx}
\usepackage{booktabs} 
\usepackage{bm}
\usepackage{pgfplots}

\usepackage[ruled,lined,linesnumbered]{algorithm2e}
\usepackage[capitalize,noabbrev]{cleveref}
\theoremstyle{plain}
\newtheorem{theorem}{Theorem}[section]

\newtheorem{lemma}[theorem]{Lemma}

\theoremstyle{definition}
\newtheorem{definition}[theorem]{Definition}
\newtheorem{assumption}[theorem]{Assumption}

\crefname{assumption}{assumption}{assumptions}
\Crefname{assumption}{Assumption}{Assumptions}

\theoremstyle{remark}
\newtheorem{remark}[theorem]{Remark}

\newcommand{\E}{\mathbb{E}}

\newcommand{\argmin}{\mathop{\mathrm{argmin}}\limits}
\newcommand{\argmax}{\mathop{\mathrm{argmax}}\limits}

\newcommand{\cc}{\mathrm{c}}
\newcommand{\dd}{\mathrm{d}}

\title{Best Arm Identification in Generalized Linear Bandits via Hybrid Feedback}

\author{%
Qirun Zeng \\
Department of Computer Science\\
City University of Hong Kong, Hong Kong SAR, China \\
\And
Xuchuang Wang \\
Manning College of Information \& Computer Science\\
University of Massachusetts Amherst, Amherst, MA, USA \\
\And
Jiayi Shen \\
School of Management\\
University of Science and Technology of China, Hefei, Anhui, China \\
\And
Xutong Liu \\
Computer Science and Systems\\
University of Washington, St. Louis, Missouri, USA \\
\And
Fang Kong \\
School of Computer Science and Technology\\
Southern University of Science and Technology, Shenzhen, Guangdong, China \\
\And
Jinhang Zuo\thanks{Corresponding author. Email: jinhang.zuo@cityu.edu.hk} \\
Department of Computer Science\\
City University of Hong Kong \\
}

\begin{document}

\maketitle

\begin{abstract}
We study fixed-confidence best arm identification in generalized linear bandits under a hybrid feedback model: at each round, the learner may query either (i) absolute reward feedback from a single arm or (ii) relative (dueling) feedback from an arm pair, both governed by generalized linear models. We introduce a likelihood-ratio--based confidence sequence that unifies heterogeneous generalized linear observations and yields an explicit ellipsoidal confidence set under a self-concordance assumption. Building on this confidence set, we propose a hybrid Track-and-Stop algorithm that adaptively allocates queries by tracking a minimax-optimal design over a joint action space of arms and pairs. We establish $\delta$-correctness and provide high-probability upper bounds on the stopping time. We further extend the framework to a cost-aware setting that accounts for heterogeneous acquisition costs across feedback modalities. Empirical experiments demonstrate that the proposed algorithms significantly improve sample efficiency over baseline methods.
\end{abstract}


\section{Introduction}

Large-scale interactive learning systems increasingly rely on heterogeneous user feedback.
In modern conversational and recommendation platforms, especially LLM assistants trained with reinforcement learning from human feedback (RLHF)~\citep{christiano2017deep,ouyang2022training}, supervision often comes from pairwise preferences, while absolute signals such as ratings, clicks, or task completion indicators may also be available.
Although these feedback modalities are induced by a shared latent utility structure, they correspond to fundamentally different statistical observation models: absolute feedback provides noisy evaluations of individual actions, whereas relative feedback conveys comparative information that is invariant to shifts in scale.
Designing learning algorithms that can systematically exploit such heterogeneous yet complementary feedback remains a central challenge in interactive learning.

From a theoretical perspective, existing bandit models mostly treat different feedback modalities in isolation.
Stochastic and linear bandits focus exclusively on reward-based observations, while dueling bandits and preference-based learning frameworks assume access only to pairwise comparisons~\citep{yue2011beat,yue2012k,sui2018advancements}. 
Recent work has started to bridge this gap by studying hybrid feedback bandits~\citep{wang2025fusing}.
However, existing approaches mainly combine reward and dueling feedback at the algorithmic level via joint elimination procedures. 
As a result, they do not fully exploit the shared latent structure underlying the two modalities within a unified statistical inference framework.

A natural way to move from algorithmic fusion to statistical integration is to model absolute and relative feedback as different observations of a shared latent utility. Generalized linear bandits (GLBs) provide such a framework: absolute observations provide noisy nonlinear feedback on individual arm utilities, while dueling observations provide noisy nonlinear feedback on utility differences. Thus, the two modalities inform the same latent parameter through distinct observation channels. This motivates our study of fixed-confidence best-arm identification (BAI) with hybrid feedback under generalized linear structure, where the goal is to identify the best arm with high confidence while adaptively choosing between absolute and relative queries.

Addressing this problem requires more than simply combining two feedback channels, and introduces several new challenges.
First, reward and dueling feedback carry heterogeneous information: reward queries observe individual arm utilities, while dueling queries observe utility differences.
Their distinct nonlinear likelihoods and curvature profiles make naive aggregation statistically inefficient.
Second, fixed-confidence inference is harder under generalized linear structure, since uncertainty depends on both the unknown latent parameter and the queried arm or arm pair.
This complicates the construction of time-uniform confidence regions for adaptive sampling and stopping.
Third, exploration must be optimized jointly over actions and feedback modalities.
The learner must decide which arms or arm pairs to query, when to use absolute or relative feedback, and how to account for potentially heterogeneous acquisition costs.
Together, these challenges call for a unified inference, stopping, and allocation framework that estimates the shared latent utility, certifies the best arm, and adaptively balances absolute and relative feedback during pure exploration.

\begin{table*}[t]
  \small
  \centering
\caption{Comparison of fixed-confidence BAI across feedback models and structural assumptions.}
  \label{tab:BAI}
  \begin{tabular}{lccccc}
    \toprule
    & Model & $C_{\text{reward}}$ & $C_{\text{dueling}}$$^*$ & Opt.$^\dagger$ & Sample Complexity$^\ddagger$ \\
    \midrule
    \citet{jedra2020optimal} & Linear & 1 & $\emptyset$  & \ding{51} & $\tilde{\mathcal O}\big(T^\star_{1,\emptyset}(\bm\theta^\star)\log\frac{1}{\delta}\big)$  \\
    \citet{jun2021improved} & Logistic & 1 ($\emptyset$) & $\emptyset$ (1) & \ding{55} & $\tilde{\mathcal O}\left(T'_{1(\emptyset), \emptyset(1)}(\bm\theta^\star)\log\frac{1}{\delta}\right)$\\
    \citet{rivera2024near} & Logistic & 1 & $\emptyset$ & \ding{51} & $\tilde{\mathcal O} \left(T^{\star}_{1, \emptyset}(\bm \theta^\star) \log^2\frac{1}{\delta}\right)$\\
    \citet{kazerouni2021best} & GLM & 1 & $\emptyset$ & \ding{55} & $\tilde{\mathcal{O}} \left(T''_{1,\emptyset}(\bm \theta^\star) \log^2 \frac{1}{\delta}\right)$\\
    HyTS (Sec.~\ref{sec:g-optimal}) & Hybrid GLM & 1 & 1 & \ding{51} & $\tilde{\mathcal O}\big(T^\star_{1, 1}(\bm\theta^\star)\log\frac{1}{\delta}\big)$\\
    Cost-aware-HyTS (Sec.~\ref{subsec:cost_aware_ts}) & Hybrid GLM & $a$ & $b$ & \ding{51} & $\tilde{\mathcal O}\big(T^\star_{a,b}(\bm\theta^\star)\log\frac{1}{\delta}\big)$\\
    \bottomrule
  \end{tabular}\\{

\begin{minipage}{\textwidth}
\footnotesize
$^*$$C_{\text{reward}}$ and $C_{\text{dueling}}$ denote the acquisition cost of reward and dueling feedback, respectively, with $\emptyset$ indicating unavailable.\\
$^\dagger$Opt. indicates whether the algorithm matches the displayed characteristic time up to logarithmic factors. For our hybrid rows, this refers to the confidence-width design criterion; the appendix discusses its relation to a local information-theoretic lower-bound relaxation for general GLMs.\\
$^\ddagger$$T^\star_{a,b}(\bm{\theta}^\star)$ denotes the relevant characteristic time under costs $(a,b)$; in our hybrid rows, it is the minimax confidence-width characteristic time. $T'(\cdot)$ and $T''(\cdot)$ denote finite-time complexity measures used in prior work.
We defer detailed discussions and precise definitions of the characteristic times to the appendix.
\end{minipage}
  }
  \vspace{-10pt}
\end{table*}



\subsection{Our Contributions}

We study fixed-confidence BAI in generalized linear bandits under a hybrid feedback model that allows both reward and dueling queries.
Table~\ref{tab:BAI} summarizes representative BAI results and highlights the gap addressed by this work.
Our main contributions are summarized as follows.

\textbf{Hybrid GLM BAI formulation.}
We formulate structured pure exploration with both reward and dueling feedback under a shared GLM parameterization.
Unlike unstructured hybrid-bandit models~\citep{wang2025fusing}, this setting transfers information through a latent parameter while allowing the two modalities to have different likelihoods and curvatures.

\textbf{Hybrid confidence sequences.}
We extend likelihood-ratio--based confidence sequences for self-concordant GLMs to hybrid observations.
Despite the heterogeneous likelihoods induced by reward and dueling feedback, the resulting confidence region admits a single explicit ellipsoidal form, enabling unified inference and fixed-confidence stopping.

\textbf{Geometry-aware Track-and-Stop.}
Building on these confidence sets, we design HyTS-GLB, a Track-and-Stop--style algorithm that tracks a plug-in minimax design over the joint query space of arms and arm pairs.
We prove $\delta$-correctness and high-probability stopping-time guarantees, and further relate the allocation rule to a local minimax experimental-design lower bound.

\textbf{Cost-aware extension.}
We extend HyTS-GLB to heterogeneous acquisition costs and provide high-probability guarantees on total cost.
This lets the learner trade off both information and cost across reward and dueling queries.

\textbf{Empirical validation.}
We conduct experiments showing that hybrid feedback reduces sample complexity relative to single-modality baselines, while the cost-aware variant further adapts to asymmetric query costs across modalities.


Together, these contributions yield a unified statistical and algorithmic framework for fixed-confidence best-arm identification with hybrid feedback in generalized linear bandits.
All detailed proofs are deferred to the appendix due to space limitations.


\subsection{Related Work}

\textbf{Generalized Linear Bandits.}
GLMs~\citep{mccullagh2019generalized} model exponential-family observations with natural parameter $\bm{x}^\top\bm\theta^\star$ and mean $\mu(\bm{x}^\top\bm\theta^\star)$.
\citet{filippi2010parametric} introduced GLM-based structured bandits, followed by work on regret and inference under assumptions such as self-concordance~\citep{russac2021self}.
\citet{lee2024unified} developed likelihood-ratio--based confidence sequences for self-concordant GLMs, yielding convex and numerically tight confidence regions.
We go beyond \citet{lee2024unified} by developing confidence sequences for hybrid reward–dueling feedback, where adaptive queries may arise from different GLM channels. This requires a single confidence certificate that aggregates heterogeneous Fisher curvature and feature geometry across modalities, a setting absent in prior single-channel GLM analyses.
Beyond that, \citet{kirschner2025confidence,clerico2025confidence} also construct similar confidence sequences for GLMs.

\textbf{Best-Arm Identification and Track-and-Stop.}
Best-arm identification has been widely studied for stochastic and linear bandits~\citep{bubeck2009pure,audibert2010best,soare2014best,jourdan2022choosing,shao2025linear}.
\citet{kaufmann2016complexity} established a general fixed-confidence lower bound for multi-armed bandits and proposed Track-and-Stop, which asymptotically achieves it.
For linear bandits, \citet{soare2015sequential} derived an instance-dependent lower bound, and \citet{jedra2020optimal} gave a matching algorithm.
\citet{degenne2020gamification} showed that iterative saddle-point solvers can design asymptotically optimal BAI algorithms for structured bandits.
For GLBs, \citet{kazerouni2021best} studied $(\epsilon,\delta)$-PAC BAI and provided corresponding sample-complexity guarantees.

\textbf{Relative and Hybrid Feedback.}
Relative (dueling) feedback has been studied extensively in bandit learning, particularly in dueling bandits and preference-based learning~\citep{yue2011beat,yue2012k,sui2018advancements}.
It is also practically attractive because relative feedback can be easier to elicit than absolute rewards in conversational recommendation~\citep{zhang2020conversational,yang2024conversational}.
Most closely related, \citet{wang2025fusing} study stochastic bandits with reward and dueling feedback and propose regret-minimization algorithms based on elimination and decomposition, showing that hybrid feedback helps even without structure.
We instead study fixed-confidence BAI under shared GLM structure, which requires time-uniform confidence sequences, confidence-based stopping, and design-based allocation over arms and arm pairs.
Related hybrid settings have also been explored by \citet{he2024learning} for multi-armed bandits, dueling bandits, and bandits with offline data.


\section{Problem Formulation}
\label{sec:problem_formulation}


\textbf{Generalized Linear Model.}
We begin by introducing the generalized linear model~\citep{mccullagh2019generalized} underlying both reward and dueling feedback. Let $r \in \mathbb{R}$ denote a generic scalar observation. In a GLM, the conditional distribution of $r$ given a scalar parameter $\eta \in \mathbb{R}$ belongs to an exponential family and is specified by
\begin{equation}
p(r \mid \eta)
=
\exp\!\left(
\frac{r \eta - b(\eta)}{\zeta(\varphi)} + c(r,\varphi)
\right),
\label{eq:pdf}
\end{equation}
where $\eta$ is the canonical (natural) parameter, $\varphi$ is a dispersion parameter, $b(\cdot)$ is the log-partition function, and $c(r,\varphi)$ is the base measure. We use $p(r \mid \eta)$ to denote the conditional probability density (or mass) function of $r$ given $\eta$.
A fundamental property of GLMs is that the conditional mean of $r$ is given by
$\mathbb{E}[r \mid \eta] = b'(\eta) \triangleq \mu(\eta)$,
where $\mu(\cdot)$ is the mean function (link function). Many common observation models, including Gaussian, Poisson, and Bernoulli distributions, fall within this framework.

\textbf{Hybrid Generalized Linear Bandits.}
We consider a stochastic decision-making problem with $K$ arms indexed by $\mathcal K \triangleq \{1,2,\dots,K\}$. 
Each arm $i \in \mathcal K$ is associated with a known feature vector $\bm{x}_i \in \mathbb{R}^d$, and the environment is governed by an unknown parameter $\bm{\theta}^\star \in \mathbb{R}^d$. We assume the set of arms $\mathcal K$ spans $\mathbb R^d$.
The learner interacts with the environment sequentially and may acquire information through two different feedback modalities.

At each round $t$, the learner chooses either a \emph{reward query} or a \emph{dueling query}.
In the former, it selects $i_t \in \mathcal K$ and observes
$R_t \sim p_{\cc}(\cdot \mid \bm{x}_{i_t}^\top \bm{\theta}^\star)$.
In the latter, it selects $(j_t,k_t) \in \mathcal G \triangleq \{(j,k)\in\mathcal K^2: j<k\}$ and observes
$Y_t\in\{0,1\}$ drawn from
$p_{\dd}(\cdot \mid \bm{x}_{j_t,k_t}^\top \bm{\theta}^\star)$,
where $\bm{x}_{j,k}\triangleq \bm{x}_j-\bm{x}_k$, and $Y_t=1$ indicates preference for $j_t$.

Both feedback modalities are modeled as GLMs with a shared parameter $\bm{\theta}^\star$, but possibly different observation spaces and mean functions $\mu_{\cc}$ and $\mu_{\dd}$.
We use subscripts $\cc$ and $\dd$ to distinguish reward and dueling quantities.
Define the joint action space $\mathcal A \triangleq \mathcal K \cup \mathcal G$ and, for $a=(j,k)\in\mathcal G$, write $\bm x_a=\bm x_{j,k}$.
For each $a\in\mathcal A$, we define the feedback modality $m(a)$ to be $\cc$ for $a\in\mathcal K$ and $\dd$ for $a\in\mathcal G$.
Selecting $a_t \in \mathcal A$ corresponds to querying a single arm ($a_t \in \mathcal K$) or a pair ($a_t \in \mathcal G$), with observations generated by the corresponding model $p_{m(a_t)}(\cdot\mid \bm x_{a_t}^\top\bm\theta^\star)$.

\begin{remark}
The shared parameter $\bm\theta^\star$ represents a latent user- or evaluator-preference vector, which is natural in recommender systems and RLHF-style data collection (e.g., \citet{yue2012k,negahban2012iterative,christiano2017deep,yang2024conversational}). An item or response may receive an absolute rating through $\mu_{\cc}(\bm x_i^\top\bm\theta^\star)$, while a pairwise comparison is generated from the utility difference through $\mu_{\dd}((\bm x_j-\bm x_k)^\top\bm\theta^\star)$. Thus, the two modalities may have different links and noise levels while still providing complementary observations of the same underlying preference structure. This flexibility is not merely cosmetic: choosing different link functions for reward and dueling feedback can change the local curvature and information carried by each query, and may yield gains in regimes where one modality is saturated, noisier, or better aligned with the relevant decision boundary.
\end{remark}

\textbf{Best Arm Identification Objective.}
Under the hybrid generalized linear bandit problem described above, we study a fixed-confidence best arm identification (BAI) problem. 
The learner sequentially queries reward or dueling feedback and aims to identify the single arm with the highest expected reward using as few samples as possible.
Formally, the optimal arm is defined as
    \[
    i^\star
    \triangleq
    \argmax_{i \in \mathcal K} \mu_{\cc}\!\left(\bm{x}_i^\top \bm{\theta}^\star\right),
    \]
and we assume it is unique. 
We consider adaptive strategies that select both the feedback modality and the queried arms over time. 
Let $\tau$ denote a stopping time and let $\hat i^\star_\tau$ be the arm recommended upon termination.

\begin{definition}
Given a confidence level $\delta \in (0,1)$, an exploration strategy is said to be \emph{$\delta$-correct} if
\[
    \Pr (\hat i^\star_\tau \neq i^\star) \le \delta \text{ and } \Pr (\tau < +\infty) = 1.
\]
\end{definition}

The goal is to design a $\delta$-correct exploration strategy with a small stopping time.
We adopt the following standard assumptions from the generalized linear bandit literature~\citep{russac2021self,lee2024unified}.

\begin{assumption}[Bounded parameter]\label{asp:theta-bound}
    The unknown parameter satisfies $\bm{\theta}^\star \in \Theta \subseteq \mathcal B^d(S)
    \triangleq \{\bm{\theta} \in \mathbb R^d : \|\bm{\theta}\|_2 \le S\}$,
    where $\Theta$ is compact and convex.
\end{assumption}

\begin{assumption}[Bounded features]\label{asp:arm-bound}
    The arm feature set satisfies $\mathcal X \triangleq \{\bm{x}_i : i \in \mathcal K\} \subseteq \mathcal B^d(1)$.
    This implies $\|\bm{x}_{j,k}\|_2 \le 2$ for all $(j,k) \in \mathcal G$.
\end{assumption}

\begin{assumption}[Smoothness and convexity]\label{asp:diff-convex}
The log-partition functions $b_{\cc}(\cdot)$ and $b_{\dd}(\cdot)$ are three times continuously differentiable and convex, with
$\mu_{m(a)}'(\cdot) = b_{m(a)}''(\cdot) \ge 0$ for every $a\in\mathcal A$.
\end{assumption}

\begin{remark}
Under Assumption~\ref{asp:diff-convex} and the uniqueness assumption, the optimal arm can equivalently be written as $i^\star = {\argmax}_{i \in \mathcal K} \bm{x}_i^\top \bm{\theta}^\star$, since the mean function $\mu_{\cc}(\cdot)$ is monotone.
\end{remark}

\section{Confidence Sequences for Hybrid GLB}\label{sec:cs-hybrid}

In this section, we develop time-uniform confidence sequences for $\bm{\theta}^\star$ under the hybrid feedback model.
Our construction follows the likelihood-ratio framework of \citet{lee2024unified}, adapted to mixed feedback.
We first build a likelihood-ratio--based sequence, then use self-concordance to derive an explicit ellipsoidal set for algorithmic design.

\subsection{Maximum Likelihood Estimation}

To distinguish between reward and dueling feedback, we define the following index sets according to the action $a_s$ selected at round $s$: $\mathcal C_t \triangleq \{ s \le t : a_s \in \mathcal K\}$ and $\mathcal D_t \triangleq \{ s \le t : a_s \in \mathcal G\}$.
Let $Z_s$ denote the observation at round $s$, i.e., $Z_s=R_s$ if $a_s\in\mathcal K$ and $Z_s=Y_s$ if $a_s\in\mathcal G$.

Under the GLM in Eq.~\eqref{eq:pdf}, the cumulative negative log-likelihood at time $t$ can be decomposed as \[
    \mathcal L_t(\bm \theta) \triangleq \sum_{s=1}^t \frac{b_{m(a_s)}(\bm{x}_{a_s}^\top \bm\theta) - Z_s \bm x_{a_s}^\top \bm \theta}{\zeta_{m(a_s)}(\varphi_{m(a_s)})}.
\]
We define the $\ell_2$-constrained maximum likelihood estimator (MLE) as
\begin{equation}
    \hat{\bm{\theta}}_t
    \triangleq
    \argmin_{\bm \theta \in \Theta} \mathcal L_t(\bm\theta),
    \label{eq:theta-MLE}
\end{equation}
where $\Theta \subset \mathbb R^d$ is a compact, convex parameter set. Equivalently, whenever $\hat{\bm\theta}_t$ lies in the interior of $\Theta$,
it satisfies the first-order optimality condition $\nabla \mathcal L_t(\hat{\bm\theta}_t) = 0$.

\subsection{Likelihood-Ratio Confidence Sequences}

We first construct a likelihood-ratio confidence sequence based on the cumulative loss.
Let $L_t$ be any almost-sure upper bound on the Lipschitz modulus of $\mathcal L_t$ over $\Theta$:
\[
L_t \ge \sup_{\bm{\theta},\bm{\theta}' \in \Theta}
\frac{|\mathcal L_t(\bm{\theta}) - \mathcal L_t(\bm{\theta}')|}{\|\bm{\theta} - \bm{\theta}'\|_2}.
\]
As in \citet{lee2024unified}, we use a high-probability bound in place of the exact modulus.
In the hybrid model, $L_t$ decomposes over reward and dueling observations and thus scales with their sample counts.
Since $\|\bm{x}_{i,j}\|_2 \le 2$ while $\|\bm{x}_i\|_2 \le 1$, the dueling term incurs an additional factor of $2$.

\begin{lemma}[Unified likelihood-ratio CS for hybrid GLMs, Theorem~3.1 of~\citet{lee2024unified}]
\label{lemma:cs-hybrid}
Suppose Assumptions~\ref{asp:theta-bound}--\ref{asp:diff-convex} hold. Then for any $\delta \in (0,1)$,
    \[
        \mathbb P\big( \exists\, t \ge 1 : \mathcal L_t(\bm \theta^\star) - \mathcal L_t(\hat{\bm\theta}_t)
            \ge \beta_t(\delta) \big)
        \le \delta,
    \]
    with a time-uniform confidence radius:
    \begin{equation}
        \beta_t(\delta)
        \triangleq
        \log \tfrac{1}{\delta}
        + \inf_{c_t \in (0,1]}
        \left\{
            d \log\!\tfrac{1}{c_t}
            + 2 S L_t c_t
        \right\} \le
        \log \tfrac{1}{\delta}
        + d \log\!\left(
            \max\!\left\{ e,\; \tfrac{2 e S L_t}{d} \right\}
        \right).
        \label{eq:beta-hybrid}
    \end{equation}
\end{lemma}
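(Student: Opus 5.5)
The plan is to reproduce the argument behind Theorem~3.1 of \citet{lee2024unified}. That argument uses a PAC-Bayesian / martingale likelihood-ratio bound with a prior localized around $\bm\theta^\star$. The only new point is that every observation, whether reward or dueling, enters through its own exponential-family likelihood. The key object is the per-round likelihood ratio
\[
\frac{p_{m(a_s)}(Z_s\mid \bm x_{a_s}^\top\bm\theta)}{p_{m(a_s)}(Z_s\mid \bm x_{a_s}^\top\bm\theta^\star)}.
\]
For any fixed $\bm\theta$, its conditional expectation given the past $\mathcal F_{s-1}$ and the (predictable) action $a_s$ equals $1$. This holds for each modality separately, since the density integrates to one under $p_{m(a_s)}(\cdot\mid\bm x_{a_s}^\top\bm\theta^\star)$. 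Hence $M_t(\bm\theta)=\exp(\mathcal L_t(\bm\theta^\star)-\mathcal L_t(\bm\theta))$ is a nonnegative martingale with $M_0=1$, regardless of how reward and dueling queries are interleaved.

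The steps are as follows.

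\textbf{Step 1: mixture martingale.} I mix over a prior $Q$ to form $\bar M_t=\int M_t(\bm\theta)\,dQ(\bm\theta)$, which is again a nonnegative supermartingale (in fact a martingale) with $\bar M_0=1$. Ville's inequality then gives $\mathbb P(\exists t:\bar M_t\ge 1/\delta)\le\delta$.

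\textbf{Step 2: choice of prior.} For $c\in(0,1]$, I take $Q$ uniform on the shrunk set $\Theta_c=(1-c)\bm\theta^\star+c\Theta$. Since $\Theta$ is convex and $\bm\theta^\star\in\Theta$, we have $\Theta_c\subseteq\Theta$, and $\mathrm{vol}(\Theta_c)=c^d\,\mathrm{vol}(\Theta)$. Because $Q$ may depend on $\bm\theta^\star$ but not on the data, the martingale argument is unaffected. To get $c_t$ time-varying and data-dependent (through $L_t$), I would instead use a fixed mixture over a countable grid of $c$ values, or follow Lee et al.'s trick of bounding on the event directly. I flag this as the technical point to copy carefully from their proof.

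\textbf{Step 3: lower bound on the mixture.} On $\Theta_c$, every $\bm\theta$ satisfies $\|\bm\theta-\hat{\bm\theta}'\|_2\le 2Sc$ for the point $\hat{\bm\theta}'=(1-c)\bm\theta^\star+c\hat{\bm\theta}_t$. Lipschitzness of $\mathcal L_t$ then gives $\mathcal L_t(\bm\theta)\le\mathcal L_t(\hat{\bm\theta}')+2SL_tc$, and convexity gives $\mathcal L_t(\hat{\bm\theta}')\le(1-c)\mathcal L_t(\bm\theta^\star)+c\mathcal L_t(\hat{\bm\theta}_t)\le\mathcal L_t(\bm\theta^\star)$. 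A cleaner route is to integrate over the translate of $c\Theta$ centered so that it contains $\hat{\bm\theta}_t$, giving $\log\bar M_t\ge \mathcal L_t(\bm\theta^\star)-\mathcal L_t(\hat{\bm\theta}_t)-2SL_tc-d\log(1/c)$. Combining with Step 1 and optimizing over $c_t$ yields $\beta_t(\delta)$.

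\textbf{Step 4: the explicit bound.} Choosing $c_t=\min\{1,d/(2SL_t)\}$ gives $d\log\max\{1,2SL_t/d\}+d$, which is at most $d\log\max\{e,2eSL_t/d\}$.

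\textbf{Hybrid-specific ingredients.} Convexity of $\mathcal L_t$ follows from Assumption~\ref{asp:diff-convex}, since each summand is convex in $\bm\theta$. The Lipschitz modulus $L_t$ is a sum of per-modality terms: reward terms scale with $\|\bm x_i\|\le1$ and dueling terms with $\|\bm x_{j,k}\|\le2$, which is where the factor $2$ enters.

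\textbf{Main obstacle.} I expect the main difficulty to be the data-dependence of $L_t$ and $c_t$ inside a time-uniform statement. I would handle it as Lee et al.\ do, requiring $L_t$ to be an almost-sure (predictable or deterministic) bound so that the prior can be fixed per $t$. Alternatively, I would take a union over a geometric grid of $c$ values, which costs only lower-order terms that the infimum absorbs.
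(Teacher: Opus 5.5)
There is a genuine gap in Steps~2--3, and it comes from where you place the prior. Localizing $Q$ on $\Theta_c=(1-c)\bm\theta^\star+c\Theta$ puts all of its mass where $M_t(\bm\theta)\approx 1$. The argument instead needs mass near $\hat{\bm\theta}_t$, the data-dependent maximizer of $M_t$.

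Your first route in Step~3 shows the consequence. Lipschitzness and convexity give only $\mathcal L_t(\bm\theta^\star)-\mathcal L_t(\bm\theta)\ge c\,[\mathcal L_t(\bm\theta^\star)-\mathcal L_t(\hat{\bm\theta}_t)]-2SL_tc$ on $\Theta_c$. Combined with Ville, this yields a radius $\log(1/\delta)/c+2SL_t$, which grows linearly in $t$ and has no $d\log(1/c)$ versus $2SL_tc$ trade-off. After your final step ``$\le\mathcal L_t(\bm\theta^\star)$'', it degenerates further to $\log\bar M_t\ge-2SL_tc$, which says nothing about $\mathcal L_t(\bm\theta^\star)-\mathcal L_t(\hat{\bm\theta}_t)$.

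Your ``cleaner route'' is the right computation, but it is incompatible with your prior. The term $d\log(1/c)=\log\big(\mathrm{vol}(\Theta)/\mathrm{vol}(c\Theta)\big)$ arises only if $Q$ is spread uniformly over all of $\Theta$. Under $Q$ uniform on $\Theta_c$, the set $(1-c)\hat{\bm\theta}_t+c\Theta$ need not even meet the support of $Q$.

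The repair is the argument of \citet[Theorem~3.1]{lee2024unified} that the paper invokes: take $Q$ uniform on $\Theta$, independent of $c$, of the data, and of $\bm\theta^\star$. For every $c\in(0,1]$, convexity of $\Theta$ gives $\tilde\Theta_t(c)\triangleq(1-c)\hat{\bm\theta}_t+c\Theta\subseteq\Theta$. Moreover, $\mathrm{vol}(\tilde\Theta_t(c))=c^d\,\mathrm{vol}(\Theta)$ and $\|\bm\theta-\hat{\bm\theta}_t\|_2\le 2Sc$ on $\tilde\Theta_t(c)$. Hence
\[
\bar M_t\ \ge\ \frac{1}{\mathrm{vol}(\Theta)}\int_{\tilde\Theta_t(c)}M_t(\bm\theta)\,\mathrm{d}\bm\theta\ \ge\ c^d\exp\big(\mathcal L_t(\bm\theta^\star)-\mathcal L_t(\hat{\bm\theta}_t)-2SL_tc\big).
\]
Equivalently, this is a Donsker--Varadhan step with posterior uniform on $\tilde\Theta_t(c_t)$.

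This inequality holds pathwise for all $c$ simultaneously. So on the single Ville event $\{\forall t:\ \bar M_t<1/\delta\}$ you may take the infimum over any data-dependent $c_t$, and $L_t$ may itself depend on the data. Your ``main obstacle'' therefore does not exist, and your proposed remedies would hurt. A prior that changes with $t$ destroys the supermartingale property behind the time-uniform Ville bound. A grid mixture with weights $\pi_k$ adds a $\log(1/\pi_k)$ term that the exact radius $\beta_t(\delta)$ does not contain.

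Your per-modality martingale observation, the factor~$2$ in $L_t$, and Step~4 are correct.
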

Using the high-probability Lipschitz bound in~\citet[Table~1]{lee2024unified}, we have $L_t = O(t)$ w.h.p. and hence $\beta_t(\delta) = O(\log(1/\delta) + d \log t)$.
The proof follows the same arguments as~\citet[Theorem~3.1]{lee2024unified} 
with a minor modification in the Lipschitz constant, and is therefore omitted.

\subsection{Ellipsoidal Confidence Sequences under Self-Concordance}

The likelihood-ratio CS controls the loss difference but is not directly helpful to algorithmic design. We therefore exploit self-concordance to convert it into an ellipsoidal confidence sequence.

\begin{assumption}[Self-concordant GLMs~\citep{russac2021self}]
\label{asp:self-concordant}
    The GLMs of reward and dueling feedback are self-concordant. That is, there exist finite constants $M_\cc$ and $M_{\dd}$ such that, for all $a\in\mathcal A$,
        $|\mu_{m(a)}''(\eta)| \le M_{m(a)} \mu_{m(a)}'(\eta),
        \forall\,
        \eta \in \{\bm{x}_a^\top \bm{\theta}:\bm{\theta}\in\Theta\}$.
\end{assumption}

Although $\hat{\bm{\theta}}_t$ minimizes the combined loss $\mathcal L_t$ rather than each component, self-concordance holds uniformly over $\Theta$ and does not require it to minimize $\mathcal L_t^\cc$ or $\mathcal L_t^\dd$.
This is what lets the two feedback streams share one estimator and one confidence certificate rather than producing two separate ellipsoids that would have to be reconciled afterward.
To unify the two feedback modalities in inference and design, we define the aggregated information matrix
    \[
    \bm A_t
    \triangleq
    \sum_{s=1}^t
    \frac{
        \mu_{m(a_s)}'(\bm x_{a_s}^\top\hat{\bm\theta}_t)
    }{
        2(1+S\rho_{a_s}M_{m(a_s)})\zeta_{m(a_s)}(\varphi_{m(a_s)})
    }
    \bm x_{a_s}\bm x_{a_s}^\top,
    \]
where $\rho_a=1$ for $a\in\mathcal K$ and $\rho_a=2$ for $a\in\mathcal G$.
Intuitively, the matrix $\bm A_t$ aggregates curvature information from both feedback types and serves as the effective information matrix in the ellipsoidal confidence set.

\textit{Identifiability.}
Throughout, we impose the identifiability condition (\citet{kazerouni2021best}): there exists a (random but finite) time $E$ such that the matrix $\bm{A}_E$ is positive definite. Consequently, $\bm A_t \succ 0$ for all $t \ge E$, and $\bm A_t^{-1}$ is well defined.

Despite heterogeneous curvature induced by reward and dueling observations, their contributions combine additively into a single information matrix, yielding a unified ellipsoidal confidence region.

\begin{theorem}[CS for hybrid GLMs]
\label{thm:ellip-hybrid}
    Define
        $\mathcal E_t(\delta)
        \triangleq
        \{
            \bm{\theta} \in \mathbb R^d :
            \|\bm{\theta} - \hat{\bm{\theta}}_t\|_{\bm{A}_t}^2
            \le \beta_t(\delta)
        \}$.
    We have
    \[
        \mathbb P\!\left( \exists\, t \ge 1 : \bm{\theta}^\star \notin \mathcal E_t(\delta) \right)
        \le \delta.
    \]
\end{theorem}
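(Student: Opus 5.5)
We derive Theorem~\ref{thm:ellip-hybrid} from Lemma~\ref{lemma:cs-hybrid} by a deterministic inequality. On the event of Lemma~\ref{lemma:cs-hybrid}, which has probability at least $1-\delta$, we have $\mathcal L_t(\bm\theta^\star)-\mathcal L_t(\hat{\bm\theta}_t)<\beta_t(\delta)$ for all $t$. It then suffices to show that, for every $t$ and every $\bm\theta\in\Theta$,
\[
\mathcal L_t(\bm\theta)-\mathcal L_t(\hat{\bm\theta}_t)\;\ge\;\|\bm\theta-\hat{\bm\theta}_t\|_{\bm A_t}^2 .
\]
Applying this with $\bm\theta=\bm\theta^\star\in\Theta$ gives $\bm\theta^\star\in\mathcal E_t(\delta)$ simultaneously for all $t$. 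No new probabilistic argument is needed, and time-uniformity is inherited from the lemma.

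To prove the inequality, we Taylor-expand $\mathcal L_t$ around $\hat{\bm\theta}_t$ with integral remainder. Write $\bm\Delta=\bm\theta-\hat{\bm\theta}_t$. Then
\[
\mathcal L_t(\bm\theta)-\mathcal L_t(\hat{\bm\theta}_t)=\nabla\mathcal L_t(\hat{\bm\theta}_t)^\top\bm\Delta+\sum_{s\le t}\frac{(\bm x_{a_s}^\top\bm\Delta)^2}{\zeta_{m(a_s)}(\varphi_{m(a_s)})}\int_0^1(1-v)\,\mu'_{m(a_s)}\big(\bm x_{a_s}^\top(\hat{\bm\theta}_t+v\bm\Delta)\big)\,dv .
\]
The first-order term is nonnegative. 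This holds because $\hat{\bm\theta}_t$ minimizes the convex function $\mathcal L_t$ over the convex set $\Theta$ and $\bm\theta\in\Theta$, so $\nabla\mathcal L_t(\hat{\bm\theta}_t)^\top(\bm\theta-\hat{\bm\theta}_t)\ge 0$. We therefore drop it, which also covers the boundary case where the gradient is nonzero. The loss splits additively over rounds, each round belonging to one modality, so the curvature analysis can be done term by term.

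The key step, and the only one with real content, is a per-term self-concordance lower bound. For a fixed round $s$, write $\mu=\mu_{m(a_s)}$, $M=M_{m(a_s)}$, $z_1=\bm x_{a_s}^\top\hat{\bm\theta}_t$ and $z_2=\bm x_{a_s}^\top\bm\theta$. Both $z_1$ and $z_2$ lie in the segment $\{\bm x_{a_s}^\top\bm\theta':\bm\theta'\in\Theta\}$ because $\Theta$ is convex, so Assumption~\ref{asp:self-concordant} applies along the path. From $|(\log\mu')'|\le M$ we get $\mu'(z_1+v(z_2-z_1))\ge\mu'(z_1)e^{-Mv|z_2-z_1|}$. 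This gives
\[
\int_0^1(1-v)\mu'(\cdot)\,dv\;\ge\;\mu'(z_1)\,\frac{r-1+e^{-r}}{r^2},\qquad r=M|z_2-z_1|,
\]
and $\frac{r-1+e^{-r}}{r^2}\ge\frac{1}{2+r}$ for $r\ge0$. This is the standard bound of \citet{lee2024unified} and \citet{russac2021self}.

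Next we bound $r$ uniformly. Since $\|\bm\theta\|,\|\hat{\bm\theta}_t\|\le S$ and $\|\bm x_{a_s}\|\le\rho_{a_s}$, we have $|z_2-z_1|\le 2S\rho_{a_s}$. Hence $\frac{1}{2+r}\ge\frac{1}{2(1+S\rho_{a_s}M)}$, which is exactly the weight used in $\bm A_t$. This is where the factor $\rho_a=2$ for dueling pairs enters; we expect matching these constants exactly to be the main point requiring care. Summing over $s$ yields $\mathcal L_t(\bm\theta)-\mathcal L_t(\hat{\bm\theta}_t)\ge\bm\Delta^\top\bm A_t\bm\Delta$, which completes the argument.
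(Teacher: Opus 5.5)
Your proposal is correct and follows essentially the same route as the paper. The shared steps are:
\begin{itemize}
\item a Taylor expansion with integral remainder around $\hat{\bm\theta}_t$;
\item dropping the first-order term by constrained optimality over convex $\Theta$;
\item the self-concordance bound $\int_0^1(1-v)\mu'(z_1+v(z_2-z_1))\,dv\ge \mu'(z_1)/(2+M|z_1-z_2|)$ with $|z_1-z_2|\le 2S\rho_a$;
\item summing into $\bm A_t$ and invoking Lemma~\ref{lemma:cs-hybrid}.
\end{itemize}
The differences are only cosmetic. You expand the combined loss round by round instead of first splitting it into reward and dueling components, and you derive the integral bound (Lemma~D.1 of Lee et al.) directly rather than citing it.
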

This ellipsoidal form enables a simple stopping rule and a design-based allocation strategy.

\begin{proof}[Proof sketch]
It suffices to show $\|\bm{\theta}-\hat{\bm{\theta}}_t\|^2_{A_t}
\le \mathcal L_t(\bm{\theta})-\mathcal L_t(\hat{\bm{\theta}}_t)$.
By \citet[Lemma~D.1]{lee2024unified} and Assumption~\ref{asp:self-concordant},
each component loss admits the corresponding quadratic lower bound at $\hat{\bm{\theta}}_t$;
summing yields $\mathcal L_t(\bm{\theta})-\mathcal L_t(\hat{\bm{\theta}}_t)
\ge \|\bm{\theta}-\hat{\bm{\theta}}_t\|_{A_t}^2$,
which with~\Cref{lemma:cs-hybrid} concludes the proof.
\end{proof}

Our construction builds on the framework of~\citet{lee2024unified} and adapts it to hybrid feedback.
The resulting confidence ellipsoid aggregates curvature information from each modality according to its sampling frequency and feature geometry, enabling a unified treatment of heterogeneous feedback.
This is the statistical object used by the stopping and allocation rules below: the same matrix that certifies optimality also tells the algorithm whether the next most valuable observation is an absolute reward or a relative comparison.


\section{Hybrid Track-and-Stop Algorithms}
\label{sec:sampling-strategy}

This section introduces a hybrid sampling framework that integrates reward and dueling feedback within a unified pure-exploration procedure. The central idea is to exploit the complementary statistical information provided by the two feedback modalities, while adaptively allocating samples across them. Our approach extends the Track-and-Stop paradigm~\citep{garivier2016optimal,jedra2020optimal} to hybrid generalized linear bandits.

\subsection{HyTS-GLB Algorithm}
\label{sec:g-optimal}

\textbf{Minimax-Optimal Design.}
Following the Track-and-Stop principle, we define sampling proportions through a minimax optimal-design problem. The criterion allocates samples so as to minimize the largest confidence width among all comparisons between the current best arm and its competitors. It is closely related to classical $G$-optimal design~\citep{kiefer1960equivalence,pukelsheim2006optimal}, which has played a central role in best-arm identification for linear bandits~\citep{soare2014best}.

To leverage the ellipsoidal confidence sets for efficient pure exploration, we now specify how samples should be allocated across actions. Let $\hat{\bm\theta}_t$ denote the maximum likelihood estimator at time $t$, and define the current empirical best arm, with ties broken arbitrarily, as
    $\hat i_t^\star = {\argmax}_{i\in\mathcal{K}} \bm{x}_i^\top \hat{\bm\theta}_t $.
    
Given a sampling weight vector $\bm{w} \in \Delta_{\mathcal{A}} \triangleq \{\bm{w} \mid  w_a \ge 0 \land \sum_{a\in\mathcal A} w_a = 1\}$, we define the hybrid information matrix
    \[
    \bm A(\bm w,\bm\theta)
    \triangleq
    \sum_{a\in\mathcal A}
    \bm w_a \,
    \frac{\mu_{m(a)}'(\bm{x}_a^\top \bm\theta)}
    {2(1 + S\rho_a M_{m(a)})\zeta_{m(a)}(\varphi_{m(a)})}
    \, \bm{x}_a \bm{x}_a^\top.
    \]
The minimax-optimal sampling proportions $\bm w^\star(t)$ are any solution to
\[
    \bm w^\star (t) =
    \argmin_{\bm w \in \Delta_{\mathcal{A}}}
    \phi ({\hat{\bm\theta}_t}, \bm w) \triangleq \begin{cases} 
        \max_{i \ne \hat i^\star_t} 
        \left\| \bm{x}_{\hat i^\star_t, i} \right\|^2_{\bm A(\bm w, {\hat{\bm\theta}_t})^{-1}}, &\text{if } \bm A(\bm w, \hat{\bm \theta}_t) \succ 0,\\
        +\infty, &\text{otherwise}.
    \end{cases}
\]
Since they specify asymptotic sampling frequencies rather than an explicit sequential policy, we adopt a tracking rule to realize these proportions online.

\textbf{Tracking Rule.}
The learner tracks the target proportions by selecting the most under-sampled action relative to its cumulative target mass, while incorporating a vanishing forced-exploration mechanism to guarantee persistent excitation.
Let $\mathcal A_{0} \subset \mathcal A$ be a fixed subset of arms such that
\(
\lambda_{\min}\!\bigl(\sum_{a \in \mathcal A_0} \bm x_{a} \bm x_{a}^\top\bigr) > 0,
\) where $\lambda_{\min}(\cdot)$ denotes the smallest eigenvalue, and let $\pi_{0}$ denote a probability distribution supported on $\mathcal A_0$ (e.g., the uniform distribution).
Let $\mathcal T_t$ be the set of tracking rounds up to time $t$, and let $\bm N^{\mathcal T}(t)$ and $\bm W^{\mathcal T}(t)$ denote the corresponding pull counts and cumulative target mass.
At each round,
\begin{equation}
a_t \in 
\begin{cases}
\argmin_{a \in \mathcal A} 
\big(N^{\mathcal T}_a(t\!-\!1) - W^{\mathcal T}_a(t\!-\!1)\big),
& \text{w.p. } 1-\epsilon_t, \\
a \sim \pi_0, 
& \text{w.p. } \epsilon_t,
\end{cases}\label{eq:action-choice}
\end{equation}
where $\epsilon_t = t^{-\alpha}$ for some $\alpha\in(0,1)$.
The forced-exploration component guarantees the non-degeneracy of the information matrix at only an asymptotically lower-order cost, while the tracking component ensures convergence to $\bm w^\star$.
Moreover, we assume the curvature is bounded away from zero on the relevant domain (\Cref{asp:curvature}), a standard condition in best arm identification that ensures each exploration action provides non-vanishing curvature~\citep{jun2021improved,kazerouni2021best}; in these works, the sample complexity also depends on $\kappa^{-1}$.

\begin{assumption}\label{asp:curvature}
There exists $\kappa>0$ such that
$\inf_{a\in\mathcal A,\bm\theta\in\Theta}
\mu'_{m(a)}(\bm x_a^\top\bm\theta)
\ge \kappa$.
\end{assumption}

\begin{remark}
If minimax-optimal tracking alone ensures persistent excitation, i.e., the induced information matrix is almost surely full rank with $\lambda_{\min}(A_t) \gtrsim t^\alpha$ for some $\alpha>0$, then forced exploration is unnecessary and one may set $\epsilon_t \equiv 0$.
\end{remark}

\textbf{Stopping Rule.}
The algorithm stops at the first time $t$ such that the current empirically best arm $\hat i_t^\star$ is certified optimal against all competitors with high confidence:
\begin{equation}
\label{eq:stopping}
    \forall j\neq \hat i_t^\star:\quad
    \inf_{\bm{\theta} \in \mathcal E_t(\delta)}
    \bm{x}_{\hat i_t^\star, j}^\top \bm{\theta} > 0.
\end{equation}

\begin{theorem}[Correctness of the stopping rule]
The stopping rule~\eqref{eq:stopping} guarantees that, upon stopping, the returned arm satisfies $\hat i_\tau^\star = i^\star$ with probability at least $1-\delta$.
\end{theorem}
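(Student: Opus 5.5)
The plan is to reduce correctness to the time-uniform coverage event of \Cref{thm:ellip-hybrid}. Define the good event
\[
\mathcal G \triangleq \bigl\{\forall\, t\ge 1:\ \bm\theta^\star \in \mathcal E_t(\delta)\bigr\},
\]
which by \Cref{thm:ellip-hybrid} satisfies $\mathbb P(\mathcal G)\ge 1-\delta$. I will show that on $\mathcal G$, whenever the algorithm stops at a finite time $\tau$, the recommended arm is $\hat i^\star_\tau = i^\star$. It then follows that $\Pr(\tau<\infty,\ \hat i^\star_\tau\neq i^\star)\le \Pr(\mathcal G^{\mathrm c})\le\delta$.

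\textbf{Deterministic step.} Fix an outcome in $\mathcal G$ and suppose the stopping condition~\eqref{eq:stopping} holds at time $\tau$. For every $j\neq \hat i^\star_\tau$,
\[
\bm x_{\hat i^\star_\tau,j}^\top\bm\theta^\star \;\ge\; \inf_{\bm\theta\in\mathcal E_\tau(\delta)} \bm x_{\hat i^\star_\tau,j}^\top\bm\theta \;>\;0,
\]
because $\bm\theta^\star\in\mathcal E_\tau(\delta)$. Hence $\bm x_{\hat i^\star_\tau}^\top\bm\theta^\star > \bm x_j^\top\bm\theta^\star$ for all $j\ne\hat i^\star_\tau$, so $\hat i^\star_\tau = \argmax_i \bm x_i^\top\bm\theta^\star$. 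By the remark following \Cref{asp:diff-convex}, monotonicity of $\mu_\cc$ and uniqueness of the optimum then give $\hat i^\star_\tau=i^\star$.

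\textbf{Technical points.} First, $\mathcal E_t(\delta)$ is only a nondegenerate ellipsoid once $\bm A_t\succ0$, that is, for $t\ge E$. Before $E$ the set is unbounded in some direction, so the infimum in~\eqref{eq:stopping} is $-\infty$ and the algorithm cannot stop. This means the argument only needs to be made on $\{t\ge E\}$, where \Cref{thm:ellip-hybrid} applies directly. Second, $\tau$ is a random, data-dependent time, so a fixed-$t$ confidence bound would not suffice. The key point, and essentially the only nontrivial ingredient, is that the coverage in \Cref{thm:ellip-hybrid} is uniform over all $t$, so it holds at $t=\tau$ without any union bound or optional-stopping argument.

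If required, the infimum can be written explicitly as $\bm x_{\hat i^\star_t,j}^\top\hat{\bm\theta}_t-\sqrt{\beta_t(\delta)}\,\|\bm x_{\hat i^\star_t,j}\|_{\bm A_t^{-1}}$, but this formula is not needed for correctness. The statement is only about the returned arm, so the requirement $\Pr(\tau<\infty)=1$ in the full $\delta$-correctness definition is deferred to the stopping-time analysis.
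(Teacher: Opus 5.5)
Your proof is correct and follows the paper's argument: on the time-uniform coverage event of \Cref{thm:ellip-hybrid}, the stopping condition forces $\bm x_{\hat i^\star_\tau,j}^\top\bm\theta^\star>0$ for all $j\neq\hat i^\star_\tau$, hence $\hat i^\star_\tau=i^\star$. Your aside about $t<E$ is not needed, because the coverage bound holds for every $t\ge1$ whether or not $\bm A_t\succ0$, and the algorithm only tests stopping after warm-up. It is also slightly overstated: a singular $\bm A_t$ makes the infimum $-\infty$ only when $\bm x_{\hat i^\star_t,j}$ lies outside the range of $\bm A_t$.
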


\begin{proof}
Eq.~\eqref{eq:stopping} implies that if $\bm \theta^\star \in \mathcal E_t(\delta)$, \(
    \bm{x}_{\hat i_t^\star,j}^\top \bm{\theta}^\star > 0, \forall j \neq \hat i_t^\star,
\)
which yields $\hat i_t^\star = {\argmax}_i \bm{x}_i^\top \bm{\theta}^\star$. The time-uniform coverage of $\mathcal E_t(\delta)$ ensures this event holds with probability at least $1-\delta$.
\end{proof}

We now describe our proposed algorithm, Hybrid Track-and-Stop for GLBs, in~\Cref{alg:greedy}.

\begin{algorithm}[t]
\caption{HyTS-GLB}
\label{alg:greedy}

\KwIn{Confidence level $\delta$, warm-up rounds $E$}

Warm-up: select actions for $E$ rounds to ensure $\bm A_E$ is positive definite\;

\For{$t = E+1, E+2, \ldots$}{
    Compute the MLE $\hat{\bm\theta}_t \gets {\argmin}_{\bm\theta \in \Theta} \mathcal L_{t-1}(\bm\theta)$\;
    
    \If{the stopping condition~\eqref{eq:stopping} is satisfied}{
        \Return $\hat i_t^\star = {\argmax}_{i \in \mathcal{K}} \bm{x}_i^\top \hat{\bm\theta}_t$\;
    }

    Compute the target proportions $\bm w^\star(t)$ and then select action $a_t$ according to Eq.~\eqref{eq:action-choice}\;
    
    Execute action $a_t$ and observe feedback $Z_t$, then record the information\;
}
\end{algorithm}

\begin{definition}[Minimax characteristic time]
\label{def:characteristic-time}
For a given instance $\bm{\theta}^\star$, define the minimax characteristic time
\[
    T^\star(\bm{\theta}^\star)
    \triangleq
    \inf_{\bm w \in \Delta_{\mathcal A}}
    \frac{
        \max_{i \neq i^\star}
        \|\bm{x}_{i^\star, i}\|^2_{\bm A(\bm w, \bm{\theta}^\star)^{-1}}
    }{
        \Delta_{\min}^2
    },
\]
where
\(
\Delta_{\min}
\triangleq
\min_{i \neq i^\star}
\bm{x}_{i^\star, i}^\top \bm{\theta}^\star
\)
denotes the smallest optimality gap. 
\end{definition}
This design-dependent quantity measures the cost of statistically separating the optimal arm from suboptimal arms under the worst comparison induced by the confidence-width stopping rule.
Appendix~\ref{sec:lower-bound} discusses an information-theoretic lower bound for hybrid GLM BAI and shows that its local quadratic relaxation has the same minimax experimental-design form as $T^\star(\bm{\theta}^\star)$.
This supports interpreting HyTS-GLB as tracking the minimax allocation induced by our confidence-width criterion, while stopping-time guarantees still incur conservative constants and logarithmic factors.

\begin{theorem}[Stopping time bound]
\label{thm:stopping-time}
Assume \Cref{asp:arm-bound,asp:diff-convex,asp:self-concordant,asp:theta-bound,asp:curvature} hold. Fix any $\delta \in (0, 1/2)$, with probability at least $1\!-\!\delta$, the stopping time $\tau$ satisfies
\[
     \tau \lesssim T^\star(\bm{\theta}^\star)
        \big(
            \log \tfrac{1}{\delta}
            + d \log T^\star(\bm{\theta}^\star)
            + d \log \log \tfrac{e}{\delta}
        \big).
\]
\end{theorem}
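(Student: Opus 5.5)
We argue on the good event $\mathcal G=\{\forall t\ge 1:\ \bm\theta^\star\in\mathcal E_t(\delta)\}$. By Theorem~\ref{thm:ellip-hybrid} it has probability at least $1-\delta$, so everything below is deterministic on $\mathcal G$.

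\emph{First step: a sufficient condition for stopping.} The ellipsoid has an explicit infimum:
\[
\inf_{\bm\theta\in\mathcal E_t(\delta)}\bm x_{\hat i_t^\star,j}^\top\bm\theta=\bm x_{\hat i_t^\star,j}^\top\hat{\bm\theta}_t-\sqrt{\beta_t(\delta)}\,\|\bm x_{\hat i_t^\star,j}\|_{\bm A_t^{-1}} .
\]
Hence \eqref{eq:stopping} holds as soon as $\sqrt{\beta_t(\delta)}\,\max_{j\ne\hat i_t^\star}\|\bm x_{\hat i_t^\star,j}\|_{\bm A_t^{-1}}<\min_{j}\bm x_{\hat i_t^\star,j}^\top\hat{\bm\theta}_t$. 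On $\mathcal G$, Cauchy--Schwarz gives $|\bm x_{i,j}^\top(\hat{\bm\theta}_t-\bm\theta^\star)|\le\sqrt{\beta_t}\|\bm x_{i,j}\|_{\bm A_t^{-1}}$. Suppose the widths satisfy $\sqrt{\beta_t}\max_{i\ne j}\|\bm x_{i,j}\|_{\bm A_t^{-1}}<\Delta_{\min}/2$. Then $\hat i_t^\star=i^\star$, and each empirical gap is at least $\Delta_{\min}-\sqrt{\beta_t}\|\bm x_{i^\star,j}\|_{\bm A_t^{-1}}$. So it suffices to show
\[
4\beta_t(\delta)\max_{j\ne i^\star}\|\bm x_{i^\star,j}\|^2_{\bm A_t^{-1}}<\Delta_{\min}^2 .
\]
To reach this we need a preliminary consistency phase in which the widths for all pairs shrink. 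This comes from forced exploration together with Assumption~\ref{asp:curvature}: $\bm A_t\succeq c\,\kappa\,\lambda_{\min}(\sum_{\mathcal A_0}\bm x_a\bm x_a^\top)\,t^{1-\alpha}I$ up to constants and concentration of the exploration counts. Consequently, after a time $t_0$ that is polynomial in $\log(1/\delta)$, $\hat i^\star_t=i^\star$ and $\hat{\bm\theta}_t$ is close to $\bm\theta^\star$.

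\emph{Second step: relating $\bm A_t$ to $t\,\bm A(\bm w^\star,\bm\theta^\star)$.} Note that $\bm A_t=\sum_a N_a(t)\,\bm A(\bm e_a,\hat{\bm\theta}_t)$. Three ingredients are needed.
\begin{enumerate}
\item \emph{Tracking.} Standard cumulative tracking (as in \citet{garivier2016optimal,jedra2020optimal}) gives $\|\bm N^{\mathcal T}(t)-\bm W^{\mathcal T}(t)\|_\infty\le|\mathcal A|$, and forced rounds number $O(t^{1-\alpha})$.
\item \emph{Continuity of the design.} After $t_0$, $\hat i_t^\star=i^\star$ and $\hat{\bm\theta}_t\to\bm\theta^\star$. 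Since $\mu'$ is continuous and bounded below by $\kappa$, $\bm A(\bm w,\bm\theta)$ is continuous in $\bm\theta$ uniformly in $\bm w$, and Berge's maximum theorem makes the optimal value $\phi$ continuous. Thus the averaged targets $\frac1t\bm W(t)$ are near-optimal for $\phi(\bm\theta^\star,\cdot)$.
\item \emph{Curvature mismatch.} The curvature is evaluated at $\hat{\bm\theta}_t$ in both $\bm A_t$ and the design. Self-concordance (Assumption~\ref{asp:self-concordant}) gives $\mu'(\eta)\ge e^{-M|\eta-\eta'|}\mu'(\eta')$. With $|\bm x_a^\top(\hat{\bm\theta}_t-\bm\theta^\star)|\le 2S$, this bounds the mismatch by a constant factor without any need for exact convergence.
\end{enumerate}
Combining these, for $t\ge t_0$,
\[
\max_j\|\bm x_{i^\star,j}\|^2_{\bm A_t^{-1}}\lesssim \frac{\Delta_{\min}^2T^\star(\bm\theta^\star)}{t}.
\]

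\emph{Third step: solving the implicit inequality.} Stopping is guaranteed once $t\gtrsim T^\star\beta_t(\delta)$. Here $\beta_t(\delta)\le\log\frac1\delta+d\log(\max\{e,2eSL_t/d\})$ and $L_t=O(t)$ with high probability; the Lipschitz bound is allotted a separate failure probability, which is why $\delta<1/2$ and a union bound appear. Solving $t\ge C\,T^\star(\log\frac1\delta+d\log t)$ with the standard lemma ($t\ge a\log t+b$ holds once $t\gtrsim b+a\log(a+b)$) yields
\[
\tau\lesssim T^\star\Bigl(\log\tfrac1\delta+d\log T^\star+d\log\log\tfrac e\delta\Bigr).
\]
The $\log\log$ term arises from $\log(T^\star\log\frac1\delta)$. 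We also check that $t_0$, which is polynomial in $\log\frac1\delta$ with exponent $1/(1-\alpha)$, is dominated or absorbed into the constants.

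\emph{Main obstacle.} The hardest part is the second step: making "$\bm A_t\succeq c\,t\,\bm A(\bm w^\star,\bm\theta^\star)$" hold with explicit non-asymptotic constants. Near-optimality of the tracked weights is only asymptotic, and $\arg\min\phi$ may be non-unique. I would avoid convergence of the weights and instead use the value-based continuity argument. Where that is insufficient, I would accept a constant-factor loss, controlled via $\kappa$ and self-concordance, which the $\lesssim$ in the statement permits.
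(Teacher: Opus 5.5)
Your skeleton matches the paper's: work on the good event, reduce stopping to $4\beta_t(\delta)\max_{j\neq i^\star}\|\bm x_{i^\star,j}\|^2_{\bm A_t^{-1}}<\Delta_{\min}^2$, show $\bm A_t\gtrsim n_t\bm A(\bm w^\star,\bm\theta^\star)$, and solve the fixed-point inequality. Your first and third steps are fine. The gap is in where you get consistency from.

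You take it from the $\delta$-level confidence event: $\|\hat{\bm\theta}_t-\bm\theta^\star\|_2^2\le\beta_t(\delta)/\lambda_{\min}(\bm A_t)\lesssim\log(1/\delta)/(\kappa t^{1-\alpha})$. Reaching the fixed, instance-dependent accuracy $\epsilon_0$ needed for $\hat i_t^\star=i^\star$ and near-optimal targets then takes $t_0\asymp(\log(1/\delta)/(\kappa\epsilon_0^2))^{1/(1-\alpha)}$. Since $1/(1-\alpha)>1$, this is not dominated by $T^\star(\bm\theta^\star)\log(1/\delta)$ as $\delta\to0$. It also cannot be absorbed into a $\delta$-free constant, contrary to what you assert in the third step.

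The damage is worse than an additive term. The tracked allocation is a Ces\`aro average of all past targets, so your value-based argument in the second step only applies for $t\gg t_0$. (That argument also needs convexity of $\phi(\bm\theta^\star,\cdot)$ in $\bm w$ to pass from near-optimal individual targets to a near-optimal average.) At the relevant scale $t\asymp T^\star\log(1/\delta)$ you would not even have consistency yet. As written, you obtain at best $\tau\lesssim T^\star(\bm\theta^\star)\log(1/\delta)+C_{\bm\theta^\star}(\log(1/\delta))^{1/(1-\alpha)}$.

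The paper never uses the confidence event for consistency. Lemma~\ref{lemma:theta-asym} gives $\hat{\bm\theta}_t\to\bm\theta^\star$ almost surely from $\liminf_t\lambda_{\min}(t^{-\alpha}\bm A_t)>0$. The sampling rule never looks at $\delta$, so the a.s.-finite time $t_0$ after which $n_t\|\bm x_{i^\star,i}\|^2_{\bm A_t^{-1}}\le 2\|\bm x_{i^\star,i}\|^2_{\bm A(\bm w^\star,\bm\theta^\star)^{-1}}$ (Lemma~\ref{lemma:asym-tracking}) is $\delta$-free. It therefore contributes only a $\delta$-independent additive term, and the good event is used only in the stopping lemma (Lemma~\ref{lemma:suff-stop}).

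Alternatively, you can push your curvature-mismatch ingredient far enough to drop consistency entirely:
\begin{enumerate}
\item Self-concordance gives $\bm A(\bm w,\bm\theta)\succeq\gamma\,\bm A(\bm w,\bm\theta')$ for all $\bm\theta,\bm\theta'\in\Theta$, with $\gamma=e^{-4S\max(M_{\cc},M_{\dd})}$.
\item The identity $\bm x_{i',j}=\bm x_{i',i}+\bm x_{i,j}$ gives $\max_{j\neq i'}\|\bm x_{i',j}\|^2_{\bm A^{-1}}\le 4\max_{j\neq i}\|\bm x_{i,j}\|^2_{\bm A^{-1}}$ for any reference arms $i,i'$ and any $\bm A\succ0$.
\item Combining the two, every target $\bm w^\star(s)$ is within a factor $16\gamma^{-2}$ of optimal for $\bm\theta^\star$, even when $\hat i_s^\star\neq i^\star$.
\item Convexity of $\bm w\mapsto\max_{j\neq i^\star}\|\bm x_{i^\star,j}\|^2_{\bm A(\bm w,\bm\theta^\star)^{-1}}$ carries this to the Ces\`aro average.
\end{enumerate}
This yields a genuinely non-asymptotic bound whose only burn-in (tracking error and forced-exploration counts) is $\delta$-free. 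The price is an $e^{O(SM)}$ factor in front of $T^\star(\bm\theta^\star)$, which the paper's asymptotic factor-$2$ route does not pay.
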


\begin{proof}[Proof sketch]
Ignoring the negligible forced exploration phase, we focus on tracking.
If the design converges to the minimax-optimal allocation, the uncertainty concentrates around the minimax characteristic time $T^\star(\bm\theta^\star)$.
Stopping is ensured once the confidence radius is below $\Delta_{\min}/2$.
Hence $t \gtrsim T^\star(\bm\theta^\star)\beta_t(\delta)$ suffices, and the claim follows by a standard fixed-point argument.
\end{proof}

\Cref{thm:stopping-time} shows that HyTS-GLB achieves instance-dependent sample complexity governed by the minimax characteristic time $T^\star(\bm{\theta}^\star)$, which captures the joint information from reward and dueling feedback under the adopted confidence-width criterion.
Unlike single-modality strategies, the minimax design underlying $T^\star(\bm{\theta}^\star)$ balances absolute and relative information, enabling faster separation along informative directions.
Consequently, HyTS-GLB can strictly outperform reward-only or dueling-only methods when neither modality alone is sufficient.
\subsection{Cost-Aware HyTS-GLB}
\label{subsec:cost_aware_ts}

In many applications, different feedback modalities incur heterogeneous acquisition costs. 
To capture this asymmetry, we extend HyTS-GLB to a cost-aware setting, where the objective is to minimize the \emph{total acquisition cost} required to identify the best arm with high confidence. 
All statistical modeling, confidence sequences, and stopping rules remain unchanged from the time-minimization setting.

\textbf{Cost Model.}
Each action $a \in \mathcal A$ incurs a known acquisition cost $c_a > 0$.
For example, if all reward queries have cost $C_{\mathrm{reward}}$ and all dueling queries have cost $C_{\mathrm{dueling}}$, then $c_a = C_{\mathrm{reward}}$ for $a \in \mathcal K$ and $c_a = C_{\mathrm{dueling}}$ for $a \in \mathcal G$.
The cumulative cost up to time $t$ is \[B_t \triangleq {\sum}_{s=1}^t c_{a_s}.\]

The goal is to design a $\delta$-correct strategy that minimizes $B_\tau$.

\textbf{Cost-normalized Design and Tracking.} Let $\bm c \in \mathbb R^{|\mathcal A|}$ denote the vector of action costs.
We introduce a cost-normalized design vector
$\bm p \in \mathbb R^{\mathcal A}_{\ge 0}$
satisfying $\langle \bm c, \bm p \rangle = 1$,
where $p_a$ represents the sampling intensity per unit cost.
The induced information matrix is $\bm A(\bm p,\bm\theta)$.
Given the current estimate $\hat{\bm\theta}_t$, the cost-aware minimax-optimal design is defined as
\begin{equation}
\bm p^\star(t)
=
\argmin_{\bm p \ge 0:\,\langle \bm c, \bm p \rangle = 1}
\max_{i \neq \hat i^\star_t}
\|\bm{x}_{\hat i^\star_t, i}\|^2_{\bm A(\bm p,\hat{\bm\theta}_t)^{-1}}.
\label{eq:cost-g-opt}
\end{equation}
To implement the design in a round-based procedure, we normalize
\(
\bar{\bm p}^\star(t) = \bm p^\star(t) / \sum_{a\in\mathcal A} p^\star_a(t)
\)
and apply the same tracking rule as in HyTS-GLB.

\begin{definition}[Cost-aware characteristic time]
\label{def:characteristic-time-Cost}
The cost-aware characteristic time is
\[
T_{\mathrm{cost}}^\star(\bm{\theta}^\star)
\triangleq
\inf_{\bm p \ge 0:\,\langle \bm c, \bm p \rangle = 1}
\frac{
\max_{i \neq i^\star}
\|\bm{x}_{i^\star, i}\|^2_{\bm A(\bm p, \bm{\theta}^\star)^{-1}}
}{
\Delta_{\min}^2
}.
\]
\end{definition}

\begin{theorem}[Cost bound]
\label{thm:stopping-time-Cost}
Under the same assumptions as in \Cref{thm:stopping-time}, for any $\delta \in (0, 1/2)$, with probability at least $1-\delta$,
\[
B_\tau \lesssim
T_{\mathrm{cost}}^\star(\bm{\theta}^\star)
\big(
\log \tfrac{1}{\delta}
+ d \log T_{\mathrm{cost}}^\star(\bm{\theta}^\star)
+ d \log \log \tfrac{e}{\delta}
\big).
\]
\end{theorem}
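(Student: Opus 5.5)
The plan is to reduce the cost bound to the time bound of \Cref{thm:stopping-time} through a rescaling argument. The statistics do not change: the confidence set $\mathcal E_t(\delta)$ of \Cref{thm:ellip-hybrid} and the stopping rule~\eqref{eq:stopping} are the same. So the work is in relating the information accumulated per round to the information accumulated per unit of cost.

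\textbf{Step 1 (per-unit-cost information).} Let $\bm w\in\Delta_{\mathcal A}$ be the normalized design and let $\bar c(\bm w)=\langle \bm c,\bm w\rangle$ be its average cost per round. Then $\bm p=\bm w/\bar c(\bm w)$ is feasible for~\eqref{eq:cost-g-opt}. Since $\bm A(\cdot,\bm\theta)$ is linear in its first argument,
\[
\bm A(\bm p,\bm\theta)=\bm A(\bm w,\bm\theta)/\bar c(\bm w).
\]
Now suppose the counts track $\bar{\bm p}^\star$, so that $\bm N(t)\approx t\,\bar{\bm p}^\star$ and $B_t\approx t\,\bar c(\bar{\bm p}^\star)$. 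Then the empirical information satisfies $\bm A_t\approx B_t\,\bm A(\bm p^\star,\hat{\bm\theta}_t)$. In other words, information grows linearly in the budget $B_t$ at the rate set by the cost-normalized design. Conversely, every cost-normalized $\bm p$ arises from a $\bm w$ in this way, so the two design problems share minimizers up to scaling.

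\textbf{Step 2 (reuse the stopping analysis).} Next, repeat the argument of \Cref{thm:stopping-time} with $B_t$ in place of $t$. Work on the coverage event of \Cref{thm:ellip-hybrid}, which has probability at least $1-\delta$. The forced exploration and \Cref{asp:curvature} give $\lambda_{\min}(\bm A_t)\gtrsim\kappa t^{1-\alpha}$, so $\hat{\bm\theta}_t\to\bm\theta^\star$. Once $\hat{\bm\theta}_t$ is close enough, $\hat i_t^\star=i^\star$. By continuity of $\bm\theta\mapsto\bm A(\bm p,\bm\theta)$ (self-concordance, \Cref{asp:self-concordant}) and the tracking guarantee, we then get
\[
\max_{i\ne i^\star}\|\bm x_{i^\star,i}\|^2_{\bm A_t^{-1}}\lesssim \Delta_{\min}^2\,T^\star_{\mathrm{cost}}(\bm\theta^\star)/B_t.
\]
Stopping is certified once $\sqrt{\beta_t(\delta)}\,\|\bm x_{i^\star,i}\|_{\bm A_t^{-1}}<\Delta_{\min}/2$ for every $i\ne i^\star$. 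It therefore suffices that $B_t\gtrsim T^\star_{\mathrm{cost}}\,\beta_t(\delta)$. Since $c_a>0$ for all $a$, we have $t\le B_t/c_{\min}$. Hence $\beta_t(\delta)\lesssim\log\frac1\delta+d\log B_t$, with $c_{\min}$ absorbed into the $\lesssim$. The fixed-point inequality $B\gtrsim T(\log\frac1\delta+d\log B)$ then yields the stated bound, exactly as in the time case. One more observation: $B_\tau\le B_{\tau-1}+c_{\max}$, so overshooting by one round costs only a constant.

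\textbf{Main obstacle.} The delicate part is the tracking step. We must show that the empirical counts follow the normalized $\bar{\bm p}^\star(t)$ well enough that the realized cost rate is close to $\bar c(\bar{\bm p}^\star)$. We must also handle the forced-exploration rounds, which are drawn from $\pi_0$ regardless of cost. For the latter, I would bound the cost of forced rounds by $c_{\max}\sum_{s\le t}s^{-\alpha}=O(t^{1-\alpha})$, a lower-order term. For the former, I would use the standard cumulative-tracking lemma, $|N_a-W_a|\le|\mathcal A|$, together with an upper-hemicontinuity argument for the argmin of the convex program~\eqref{eq:cost-g-opt}. Convexity holds because $\|\bm x\|^2_{\bm A^{-1}}$ is convex in $\bm A\succ0$. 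With these, the cost-weighted averages of the targets converge to a cost-optimal design at $\bm\theta^\star$, and any residual slack can be absorbed into the constants hidden in $\lesssim$.
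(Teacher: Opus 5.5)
Your proposal is correct and follows essentially the same route as the paper. Linearity of $\bm A(\cdot,\bm\theta)$ in the design weights, together with the tracking lemma, gives $\bm A_t/B_t \gtrsim \bm A(\bm p^\star,\bm\theta^\star)$; the sufficient stopping condition is then rerun with $B_t$ in place of $t$, $B_t \ge c_{\min} t$ turns $\beta_t(\delta)$ into a logarithm of $B_t$, and the fixed-point step finishes. Your extra bookkeeping (the $O(t^{1-\alpha})$ cost of forced rounds, the one-round overshoot $c_{\max}$, convexity of the cost-normalized program) fills in details the paper leaves implicit, though like the paper you still absorb the a.s.\ finite time after which tracking takes effect into the $\lesssim$.
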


Compared with standard HyTS-GLB, which implicitly assumes uniform query costs, the cost-aware variant optimizes sampling in a cost-normalized design space.
By minimizing the worst-case uncertainty per unit cost, the algorithm adaptively favors the more cost-efficient feedback modality while preserving the same confidence sequences and stopping rule.
When acquisition costs differ, this can reduce total cost, while recovering standard HyTS-GLB as a special case under uniform costs.

\section{Experiments}\label{sec:exp}

\definecolor{myblue}{RGB}{31,119,180}
\definecolor{myorange}{RGB}{255,127,14}
\definecolor{mygreen}{RGB}{44,160,44}
\definecolor{mypurple}{RGB}{148,103,189}
\definecolor{myred}{RGB}{214,39,40}
\definecolor{mybrown}{RGB}{140,86,75}

\usetikzlibrary{patterns}
\usepgfplotslibrary{fillbetween}

\begin{figure*}[t]
\centering

\begin{subfigure}[t]{0.49\linewidth}
\centering
\begin{tikzpicture}
\begin{axis}[
    width = \linewidth,
    height=4cm,
    xlabel={\small Dimension},
    ylabel={\small Stopping Time},
    tick align = outside,
    tick pos = left,
    xticklabel style={
        font=\tiny
    },
    yticklabel style={
        font=\tiny
    },
    axis line style = {black},
    ymode=log,
    legend pos = south east,
    legend style={
    fill opacity=0.5,
    draw opacity=1,
    text opacity=1,
    font=\fontsize{3}{3}\selectfont,
    row sep=0pt,
    column sep=2pt,
    inner xsep=0pt,
    inner ysep=0pt,
    cells={anchor=west},
    /tikz/every even column/.append style={column sep=2pt},
},
]

\addplot[name path=rageupper, draw=none, forget plot] coordinates {
    (2, 121849.5)
    (4, 305792.5)
    (6, 536068.2)
    (8, 807838)
    (10, 1152456.1)
};
\addplot[name path=ragelower, draw=none, forget plot] coordinates {
    (2, 99118.5)
    (4, 275241.5)
    (6, 471211.8)
    (8, 762428)
    (10, 1116243.9)
};
\addplot[myblue, fill opacity=0.3, draw=none, forget plot] fill between[of=rageupper and ragelower];
\addplot[
    color=myblue,
    thick,
    mark=triangle*,
    mark options={
        fill=white,
        draw=myblue,
        fill opacity=0
    }
] coordinates {
    (2, 110484)
    (4, 290517)
    (6, 503640)
    (8, 785133)
    (10, 1134350)
};
\addlegendentry{Rage-GLM}

\addplot[name path=retsupper, draw=none, forget plot] coordinates {
    (2, 4092.074)
    (4, 63539.2)
    (6, 260035.6)
    (8, 696060)
    (10, 1699550)
};
\addplot[name path=retslower, draw=none, forget plot] coordinates {
    (2, 2166.126)
    (4, 25547.8)
    (6, 97888.4)
    (8, 367804)
    (10, 983370)
};
\addplot[mygreen, fill opacity=0.3, draw=none, forget plot] fill between[of=retsupper and retslower];
\addplot[
    color=mygreen,
    thick,
    mark=square*,
    mark options={
        fill=white,
        draw=mygreen,
        fill opacity=0
    }
] coordinates {
    (2, 3129.1)
    (4, 44543.5)
    (6, 178962)
    (8, 531932)
    (10, 1341460)
};
\addlegendentry{ReTS-GLB}

\addplot[name path=randomupper, draw=none, forget plot] coordinates {
    (2, 3714.113)
    (4, 34312.93)
    (6, 127357)
    (8, 285523.4)
    (10, 593650.7)
};
\addplot[name path=randomlower, draw=none, forget plot] coordinates {
    (2, 1746.687)
    (4, 14981.67)
    (6, 60947.4)
    (8, 171882.6)
    (10, 432211.3)
};
\addplot[mybrown, fill opacity=0.3, draw=none, forget plot] fill between[of=randomupper and randomlower];
\addplot[
    color=mybrown,
    thick,
    mark=o,
    mark options={
        fill=white,
        draw=mybrown,
        fill opacity=0
    }
] coordinates {
    (2, 2730.4)
    (4, 24647.3)
    (6, 94152.2)
    (8, 228703)
    (10, 512931)
};
\addlegendentry{Random}

\addplot[name path=dueltsupper, draw=none, forget plot] coordinates {
    (2, 2893.318)
    (4, 30123.38)
    (6, 105900)
    (8, 195631.8)
    (10, 334369.1)
};
\addplot[name path=dueltslower, draw=none, forget plot] coordinates {
    (2, 1710.682)
    (4, 18891.02)
    (6, 67061.2)
    (8, 136440.2)
    (10, 232470.9)
};
\addplot[mypurple, fill opacity=0.3, draw=none, forget plot] fill between[of=dueltsupper and dueltslower];
\addplot[
    color=mypurple,
    thick,
    mark=diamond*,
    mark options={
        fill=white,
        draw=mypurple,
        fill opacity=0
    }
] coordinates {
    (2, 2302)
    (4, 24507.2)
    (6, 86480.6)
    (8, 166036)
    (10, 283420)
};
\addlegendentry{DuelTS-GLB}

\addplot[name path=hytsupper, draw=none, forget plot] coordinates {
    (2, 3600.82)
    (4, 25015.28)
    (6, 73630.6)
    (8, 230408.9)
    (10, 335688.8)
};
\addplot[name path=hytslower, draw=none, forget plot] coordinates {
    (2, 1404.78)
    (4, 13573.52)
    (6, 41942.4)
    (8, 135635.1)
    (10, 221873.2)
};
\addplot[myorange, fill opacity=0.3, draw=none, forget plot] fill between[of=hytsupper and hytslower];
\addplot[
    color=myorange,
    thick,
    mark=x,
    mark options={
        fill=myorange,
        draw=myorange,
        fill opacity=0
    }
] coordinates {
    (2, 2502.8)
    (4, 19294.4)
    (6, 57786.5)
    (8, 183022)
    (10, 278781)
};
\addlegendentry{HyTS-GLB}

\end{axis}
\end{tikzpicture}
\caption{Stopping time vs. dimension}
\end{subfigure}
\hfill
\begin{subfigure}[t]{0.49\linewidth}
\centering
\begin{tikzpicture}
\begin{axis}[
    ybar,
    bar width=7pt,
    ymin=0,
    ymax=4000,  
    height=4cm,
    width=\linewidth,
    enlarge x limits=0.15,
    xtick=data,
    xticklabel style = {
        font=\tiny
    },
    yticklabel style = {
        font=\tiny
    },
    ylabel={\small Total Cost},
    xlabel={\small Cost Ratio ($C_{\text{reward}}: C_{\text{dueling}}$)},
    symbolic x coords={1:3,1:2,1:1,2:1,3:1},
    legend style={draw=black, fill=white},
    legend pos = north west,
    legend style={
        legend columns=2,
        fill opacity=0.5,
        draw opacity=1,
        text opacity=1,
        font=\fontsize{5}{6}\selectfont
    },
    legend image code/.code={
        \draw[fill=#1, draw=none](0cm,-0.1cm) rectangle (0.2cm,0.07cm);
    },
]
\def\s{6pt}
 \pgfplotsset{
    rewardL/.style={draw=none, fill=myblue},
    rewardR/.style={draw=none, fill=myorange},
    totalL/.style={draw=none, fill=myblue!40},
    totalR/.style={draw=none, fill=myorange!40},
}
\addplot+[totalL, bar shift=-\s] coordinates {(1:3,2245.15) (1:2,2107.2) (1:1,2818.2) (2:1,2351.6) (3:1,2726.95)}; 
\addplot+[totalR, bar shift=+\s] coordinates {(1:3,1614.1) (1:2,2026.6) (1:1,2818.2) (2:1,1375.2) (3:1,1144.7)}; 

\addplot+[rewardL, bar shift=-\s] coordinates {(1:3,676.3) (1:2,838.533) (1:1,2030.6) (2:1,1756.13) (3:1,2297.1)}; 
\addplot+[rewardR, bar shift=+\s] coordinates {(1:3,1583.5) (1:2,1664.47) (1:1,2030.6) (2:1,81.2) (3:1,21.75)}; 

\addlegendentry{HyTS-$C_{\text{dueling}}$}
\addlegendentry{Cost-Aware-$C_\text{dueling}$}
\addlegendentry{HyTS-$C_{\text{reward}}$}
\addlegendentry{Cost-Aware-$C_\text{reward}$}

\end{axis}

\end{tikzpicture}
\caption{Cost decomposition}
\end{subfigure}

\caption{Experimental results under different settings}
\label{fig:exp_combined}
\vspace{-10pt}
\end{figure*}

Our experiments examine two questions: whether hybrid feedback improves sample efficiency over single-modality baselines, and whether the cost-aware design adapts to heterogeneous acquisition costs. Due to the high cost of exact optimal allocation, we follow \citet{jedra2020optimal} and compute designs with Frank--Wolfe \citep{jaggi2013revisiting}. All methods are evaluated on the same generated instances under the same logistic observation model and confidence level. We set $K=d+1$ with $d\in\{2,4,6,8,10\}$, $\delta=0.05$, and $\bm{\theta}^\star=(S-1)\cdot\mathbf{1}/{\sqrt d}$ with $S=5$. Arms are constructed so that $\bm x_1^\top \bm\theta^\star = 0.9\|\bm\theta^\star\|_2$, while for $i\ge2$, $\bm x_i^\top \bm\theta^\star = 0.8 (K-i)\|\bm\theta^\star\|_2 / {(K-1)}$, with remaining coordinates random.
We compare HyTS-GLB with RageGLM \citep{jun2021improved}, ReTS-GLB, DuelTS-GLB, and a random hybrid baseline. We also tested GLGapE \citep{kazerouni2021best}, but it did not terminate within the simulation budget, consistent with observations in \citet{jun2021improved}. Results report means over $10$ runs on shared instances. As shown in~\Cref{fig:exp_combined}(a), HyTS-GLB consistently outperforms RageGLM and ReTS-GLB, and remains competitive with DuelTS-GLB across all dimensions. It achieves the smallest mean stopping time for $d=4,6,10$, while DuelTS-GLB is slightly better at $d=2,8$. This suggests that hybrid allocation is most useful when reward and dueling feedback reduce uncertainty along complementary directions, especially as the dimension grows.
We further evaluate cost-aware allocation with $C_{\text{reward}}+C_{\text{dueling}}=2$ and ${C_{\text{reward}}}/{C_{\text{dueling}}}\in\{1/3,1/2,1,2,3\}$. \Cref{fig:exp_combined}(b) reports the reward and dueling cost decomposition for $K=3,d=2$. The cost-aware variant uniformly reduces total cost relative to standard HyTS-GLB by shifting samples toward the cheaper modality; when ${C_{\text{reward}}}/{C_{\text{dueling}}}=1$, the two coincide, matching \Cref{subsec:cost_aware_ts}.

Beyond these main experiments, we provide additional evidence for the role of hybrid allocation in the appendix. We include experiments on a structured basis-plus-rotated-arm construction and on the effect of the parameter scale $S$, showing that HyTS-GLB remains competitive beyond the random-arm setting. We also give two concrete complexity comparisons: one where dueling feedback is more informative and one where reward feedback is more informative. These examples reinforce the main message that HyTS-GLB does not commit to a single feedback modality, but adapts its sampling allocation to the instance geometry and feedback utility.

\section{Concluding Remarks}
This work studies fixed-confidence best-arm identification in generalized linear bandits with hybrid reward and dueling feedback. We develop a unified likelihood-based confidence sequence that aggregates heterogeneous absolute and relative observations, combine it with a geometry-aware Track-and-Stop sampling rule, and obtain correctness and instance-dependent stopping-time guarantees.
Our analysis shows that hybrid feedback enriches the information geometry of pure exploration while preserving a modular confidence-stopping-allocation structure. In particular, the same confidence matrix both certifies the best arm and guides whether reward or dueling feedback is more informative for reducing uncertainty. The cost-aware extension further illustrates how this framework can adapt when feedback modalities have different acquisition costs.

Several directions remain open. One is to sharpen the connection between the proposed design criterion and fully instance-optimal lower bounds for general GLMs. Another is to extend the framework to time-varying costs, contextual or non-stationary environments, and richer preference models, where the usefulness of absolute and relative feedback may evolve during learning.

\bibliography{refs}
\bibliographystyle{plainnat}

\appendix

\clearpage
\section*{Appendix}
\phantomsection
\addcontentsline{toc}{section}{Appendix}
\begingroup
\small
\tableofcontents
\endgroup

\section{Proofs}

\subsection{\texorpdfstring{Proof of~\Cref{thm:ellip-hybrid}}{Proof of the ellipsoidal confidence bound}}
We follow the proof of~\citet[Theorem~3.2]{lee2024unified}.

Define
\begin{align*}
    &\tilde G^\cc_t (\bm{\theta}, \bm\nu) \triangleq \frac{1}{\zeta_c (\varphi_c)} \sum_{s \in \mathcal C_t} \tilde \alpha^\cc_s (\bm{\theta}, \bm \nu) \bm x_{i_s} \bm x_{i_s}^\top, \\
    &\tilde G^\dd_t (\bm{\theta}, \bm \nu) \triangleq \frac{1}{\zeta_d (\varphi_d)} \sum_{s \in \mathcal D_t} \tilde \alpha^\dd_s (\bm{\theta}, \bm\nu) \bm x_{a_s} \bm x_{a_s}^\top, 
\end{align*}
where
\begin{align*}
    \tilde \alpha^\cc_s (\bm{\theta},\bm \nu) &\triangleq \int_0^1 (1-v) \mu_\cc' (\langle \bm x_{a_s}, \bm{\theta} + v(\bm \nu - \bm{\theta}) \rangle) \mathrm{d} v, \\
    \tilde \alpha^\dd_s (\bm{\theta}, \bm \nu) &\triangleq \int_0^1 (1-v) \mu_\dd' (\langle \bm x_{a_s}, \bm{\theta} + v(\bm \nu - \bm{\theta}) \rangle) \mathrm{d} v.
\end{align*}

\begin{lemma}[{\citealp[Lemma~D.1]{lee2024unified}}]
    Let $\mu_\cc, \mu_\dd$ be increasing ($\mu_\cc', \mu_\dd' \ge 0$) and self-concordant with constants $M_{\cc}, M_{\dd}$. Let $\mathcal Z^\cc, \mathcal Z^\dd \subset \mathbb R$ be bounded. Then, for any $z_1^\cc, z_2^\cc \in \mathcal Z^\cc$ and $z_1^\dd, z_2^\dd \in \mathcal Z^\dd$,
    \begin{align*}
        \int_0^1 (1-v) \mu_\cc' (z_1^\cc + v(z_2^\cc - z_1^\cc)) \mathrm{d} v &\ge \frac{\mu_\cc'(z_1^\cc)}{2 + M_{\cc} |z_1^\cc - z_2^\cc|}, \\
        \int_0^1 (1-v) \mu_\dd' (z_1^\dd + v(z_2^\dd - z_1^\dd)) \mathrm{d} v &\ge \frac{\mu_\dd'(z_1^\dd)}{2 + M_{\dd} |z_1^\dd - z_2^\dd|}.
    \end{align*}
    Consequently, since $|z_1-z_2| \le \|\bm x_i^\top\|_2 \|\bm\nu-\bm{\theta}\|_2 \le 2S$ for reward observations and $|z_1-z_2| \le \|\bm x_{j,k}^\top\|_2 \|\bm\nu-\bm{\theta}\|_2 \le 4S$ for dueling observations,
    \begin{align*}
        \tilde G_t^\cc (\bm{\theta}, \bm\nu) &\succeq \frac{\nabla^2\mathcal L^\cc_t(\bm{\theta})}{2(1 + SM_{\cc})},\\
        \tilde G_t^\dd (\bm{\theta}, \bm\nu) &\succeq \frac{\nabla^2\mathcal L^\dd_t(\bm{\theta})}{2(1 + 2SM_{\dd})}.
    \end{align*}
\end{lemma}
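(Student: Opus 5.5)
The plan is to prove the scalar inequality first and then lift it to matrices by summing rank-one terms. By symmetry, it suffices to handle one generic link $\mu$ with self-concordance constant $M$; the reward and dueling cases are identical up to constants.

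\textbf{Scalar bound.} Set $h(v)=\mu'(z_1+v\delta)$ with $\delta=z_2-z_1$. Self-concordance, $|\mu''|\le M\mu'$, gives $|(\log h)'(v)|\le M|\delta|$ wherever $h>0$. If $\mu'(z_1)=0$ the claim is trivial, since the integrand is nonnegative. Otherwise, Grönwall yields $h(v)\ge \mu'(z_1)e^{-M|\delta|v}$, and hence
\[
\int_0^1(1-v)h(v)\,\mathrm dv\ \ge\ \mu'(z_1)\int_0^1(1-v)e^{-xv}\,\mathrm dv=\mu'(z_1)\,\frac{e^{-x}-1+x}{x^2},\qquad x=M|\delta|.
\]
It remains to show the elementary inequality $g(x)\triangleq (e^{-x}-1+x)/x^2\ge 1/(2+x)$ for $x\ge0$. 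This is equivalent to $(2+x)(e^{-x}-1+x)-x^2\ge0$, i.e.\ $q(x)\triangleq (2+x)e^{-x}+x-2\ge0$. We have $q(0)=0$ and $q'(x)=1-(1+x)e^{-x}\ge0$ because $e^{x}\ge1+x$, so $q\ge0$.

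\textbf{Matrix consequence.} Apply the scalar bound with $z_1=\bm x_{a_s}^\top\bm\theta$ and $z_2=\bm x_{a_s}^\top\bm\nu$ for $\bm\theta,\bm\nu\in\Theta\subseteq\mathcal B^d(S)$. By Cauchy–Schwarz and Assumption~\ref{asp:arm-bound}, $|z_1-z_2|\le\|\bm x_{a_s}\|_2\|\bm\nu-\bm\theta\|_2\le 2S\rho_{a_s}$, with $\rho=1$ for arms and $\rho=2$ for pairs. Since $t\mapsto 1/(2+Mt)$ is decreasing, each coefficient satisfies
\[
\tilde\alpha_s\ge \frac{\mu'(\bm x_{a_s}^\top\bm\theta)}{2(1+S\rho M)}.
\]
Both segment endpoints lie in $\Theta$, so by convexity of $\Theta$ the whole segment does too, and self-concordance applies along it. Multiplying by the PSD rank-one matrix $\bm x_{a_s}\bm x_{a_s}^\top/\zeta$ and summing over $s\in\mathcal C_t$ (respectively $\mathcal D_t$) gives the stated Loewner bounds. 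Here we use $\nabla^2\mathcal L^\cc_t(\bm\theta)=\zeta_\cc^{-1}\sum_{s\in\mathcal C_t}\mu_\cc'(\bm x_{a_s}^\top\bm\theta)\bm x_{a_s}\bm x_{a_s}^\top$, and similarly for the dueling part.

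\textbf{Main obstacle.} The only nontrivial step is the elementary inequality $g(x)\ge 1/(2+x)$ together with the Grönwall step. The Grönwall step needs care where $h$ might vanish. This is handled by noting that $\log h$ is Lipschitz on any interval where $h>0$, so $h$ cannot reach zero from a positive value in finite time.
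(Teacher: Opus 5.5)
Your proposal is correct and takes the same route as the paper. The paper does not prove the scalar inequality itself; it cites Lee et al., whose argument is your exponential lower bound $\mu'(z_1+v\delta)\ge\mu'(z_1)e^{-Mv|\delta|}$ followed by the elementary inequality $\frac{x-1+e^{-x}}{x^2}\ge\frac{1}{2+x}$. It then plugs in $|z_1-z_2|\le 2S\rho_a$ and sums the rank-one terms exactly as you do. Your observation that convexity of $\Theta$ keeps the whole segment inside the domain where Assumption~\ref{asp:self-concordant} holds is a detail the paper leaves implicit. The worry about $h$ vanishing is unnecessary, since $h'\ge -M|\delta|h$ already gives $(h(v)e^{M|\delta|v})'\ge 0$ directly.
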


Let \[
    \mathcal L_t^\cc \triangleq \sum_{s \in \mathcal C_t}
    { (b_{\cc}(\bm{x}_{a_s}^\top \bm\theta) - R_s \bm x_{a_s}^\top \bm \theta) }/{\zeta_{\cc}(\varphi_{\cc})},
\]
\[
    \mathcal L_t^\dd \triangleq \sum_{s \in \mathcal D_t} { (b_{\dd}(\bm x_{a_s}^\top \bm\theta) - Y_s \bm x_{a_s}^\top \bm \theta) }/{\zeta_{\dd}(\varphi_{\dd})}
\]
denote the cumulative loss incurred by reward observation and dueling observation separately.
By Taylor's theorem with integral remainder,
\begin{align*}
    \mathcal L^\cc_t (\bm{\theta}) - \mathcal L^\cc_t(\hat{\bm{\theta}}_t) 
    &= \langle \nabla \mathcal L^\cc_t (\hat{\bm{\theta}}_t), \bm{\theta} - \hat{\bm{\theta}}_t\rangle 
    + \|\bm{\theta} - \hat{\bm{\theta}}_t\|^2_{\tilde G^\cc_t (\hat{\bm{\theta}}_t, \bm{\theta})},\\
    \mathcal L^\dd_t (\bm{\theta}) - \mathcal L^\dd_t(\hat{\bm{\theta}}_t) 
    &= \langle \nabla \mathcal L^\dd_t (\hat{\bm{\theta}}_t), \bm{\theta} - \hat{\bm{\theta}}_t\rangle 
    + \|\bm{\theta} - \hat{\bm{\theta}}_t\|^2_{\tilde G^\dd_t (\hat{\bm{\theta}}_t, \bm{\theta})}.
\end{align*}
Summing the two equalities yields
\begin{align*}
    \mathcal L_t(\bm{\theta}) - \mathcal L_t(\hat{\bm{\theta}}_t)
    = \langle \nabla \mathcal L_t(\hat{\bm{\theta}}_t), \bm{\theta} - \hat{\bm{\theta}}_t\rangle
    + \|\bm{\theta} - \hat{\bm{\theta}}_t\|^2_{\tilde G^\cc_t (\hat{\bm{\theta}}_t, \bm{\theta})}
    + \|\bm{\theta} - \hat{\bm{\theta}}_t\|^2_{\tilde G^\dd_t (\hat{\bm{\theta}}_t, \bm{\theta})}.
\end{align*}
Since $\hat{\bm{\theta}}_t \in {\argmin}_{\bm{\theta}\in\Theta}\mathcal L_t(\bm{\theta})$ and $\Theta$ is convex, the first-order optimality condition implies that for any $\bm{\theta}\in\Theta$,
\[
\langle \nabla \mathcal L_t(\hat{\bm{\theta}}_t), \bm{\theta} - \hat{\bm{\theta}}_t\rangle \ge 0.
\]
Therefore,
\begin{align*}
    \mathcal L_t(\bm{\theta}) - \mathcal L_t(\hat{\bm{\theta}}_t)
    &\ge \|\bm{\theta} - \hat{\bm{\theta}}_t\|^2_{\tilde G^\cc_t (\hat{\bm{\theta}}_t, \bm{\theta})}
    + \|\bm{\theta} - \hat{\bm{\theta}}_t\|^2_{\tilde G^\dd_t (\hat{\bm{\theta}}_t, \bm{\theta})}.
\end{align*}

\begin{lemma}
    \label{lemma:split_sc_quad}
    For $t \ge 1$, consider a hybrid cumulative loss \[
        \mathcal L_t(\bm{\theta}) = \mathcal L_t^\cc(\bm{\theta}) + \mathcal L_t^\dd(\bm{\theta}),
    \]
    with $\bm H_t^\cc(\bm{\theta}) \triangleq \nabla^2 \mathcal L_t^\cc (\bm{\theta})$ and $\bm H_t^\dd (\bm{\theta}) \triangleq \nabla^2 \mathcal L_t^\dd (\bm{\theta})$.
    Let $\hat{\bm{\theta}}_t \in \arg\min_{\bm{\theta} \in \Theta} \mathcal L_t (\bm{\theta})$. Then for all $\bm{\theta} \in \Theta$, \[
        \mathcal L_t (\bm{\theta}) - \mathcal L_t(\hat{\bm{\theta}}_t) \ge \|\bm{\theta} - \hat{\bm{\theta}}_t\|^2_{\bm A_t}.
    \]
\end{lemma}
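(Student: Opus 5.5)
The plan is to start from the inequality already obtained just before the statement. By Taylor's theorem with integral remainder, applied separately to $\mathcal L_t^\cc$ and $\mathcal L_t^\dd$, and then the first-order optimality of $\hat{\bm\theta}_t$ over the convex set $\Theta$, we have for every $\bm\theta\in\Theta$
\[
\mathcal L_t(\bm{\theta}) - \mathcal L_t(\hat{\bm{\theta}}_t)
\ge \|\bm{\theta} - \hat{\bm{\theta}}_t\|^2_{\tilde G^\cc_t (\hat{\bm{\theta}}_t, \bm{\theta})}
+ \|\bm{\theta} - \hat{\bm{\theta}}_t\|^2_{\tilde G^\dd_t (\hat{\bm{\theta}}_t, \bm{\theta})}.
\]
The optimality step uses only the combined loss, so no separate optimality of either component is needed. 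It therefore remains to lower-bound each remainder matrix in the Loewner order by the corresponding block of $\bm A_t$, and then add the two bounds.

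For the reward block, I would apply the quoted Lemma~D.1 of \citet{lee2024unified} termwise with $z_1=\bm x_{a_s}^\top\hat{\bm\theta}_t$ and $z_2=\bm x_{a_s}^\top\bm\theta$. Both $\hat{\bm\theta}_t,\bm\theta\in\Theta\subseteq\mathcal B^d(S)$, so $|z_1-z_2|\le \|\bm x_{a_s}\|_2\|\bm\theta-\hat{\bm\theta}_t\|_2\le 2S$. This gives
\[
\tilde\alpha^\cc_s(\hat{\bm\theta}_t,\bm\theta)\ge \frac{\mu_\cc'(\bm x_{a_s}^\top\hat{\bm\theta}_t)}{2+2SM_\cc}=\frac{\mu_\cc'(\bm x_{a_s}^\top\hat{\bm\theta}_t)}{2(1+S\rho_{a_s}M_\cc)} \quad\text{with } \rho_{a_s}=1.
\]
For the dueling block, $\|\bm x_{j,k}\|_2\le 2$ gives $|z_1-z_2|\le 4S$. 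The same lemma then yields the denominator $2(1+2SM_\dd)$, which matches $\rho_a=2$.

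The self-concordance hypothesis of the lemma is required only on the segment between $z_1$ and $z_2$. Since $\Theta$ is convex, every point $\hat{\bm\theta}_t+v(\bm\theta-\hat{\bm\theta}_t)$ lies in $\Theta$, so Assumption~\ref{asp:self-concordant} applies there. Multiplying each scalar bound by the positive semidefinite matrix $\bm x_{a_s}\bm x_{a_s}^\top/\zeta_{m(a_s)}(\varphi_{m(a_s)})$ and summing over $\mathcal C_t$ and $\mathcal D_t$ gives
\[
\tilde G^\cc_t(\hat{\bm\theta}_t,\bm\theta)+\tilde G^\dd_t(\hat{\bm\theta}_t,\bm\theta)\succeq \bm A_t .
\]
Here the Hessian is evaluated at $\hat{\bm\theta}_t$, exactly as in the definition of $\bm A_t$. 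The quadratic form in $\bm\theta-\hat{\bm\theta}_t$ is monotone in the Loewner order, which concludes the argument.

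The main obstacle is bookkeeping, not analysis. First, the Hessian must be taken at the first argument $\hat{\bm\theta}_t$ of $\tilde G$ and not at $\bm\theta$; the lemma's bound involves $\mu'(z_1)$, so $z_1$ must correspond to $\hat{\bm\theta}_t$. Second, the denominator in Lemma~D.1, $2+M|z_1-z_2|$, must be checked to be at most $2(1+S\rho_aM)$ for both modalities. If one wanted $\bm\theta\notin\Theta$ (the ellipsoid lives in $\mathbb R^d$), convexity of $\Theta$ would fail. I would therefore restrict to $\bm\theta\in\Theta$, as in the statement, and note that $\bm\theta^\star\in\Theta$ is all that Theorem~\ref{thm:ellip-hybrid} needs.
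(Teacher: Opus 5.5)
Your proposal is correct and matches the paper's proof step for step. Both arguments expand each component loss by Taylor's theorem with integral remainder and drop the gradient term via first-order optimality of the combined loss over the convex set $\Theta$. Both then apply the quoted Lemma~D.1 with $z_1=\bm x_{a_s}^\top\hat{\bm\theta}_t$, $|z_1-z_2|\le 2S$ for reward and $|z_1-z_2|\le 4S$ for dueling, giving $\tilde G^\cc_t(\hat{\bm\theta}_t,\bm\theta)\succeq \bm H^\cc_t(\hat{\bm\theta}_t)/(2(1+SM_\cc))$ and $\tilde G^\dd_t(\hat{\bm\theta}_t,\bm\theta)\succeq \bm H^\dd_t(\hat{\bm\theta}_t)/(2(1+2SM_\dd))$, which sum to $\bm A_t$. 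Your remark that convexity of $\Theta$ keeps the whole segment inside the region where self-concordance is assumed is correct; the paper leaves this implicit.
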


\begin{proof}[Proof of~\Cref{lemma:split_sc_quad}]
    Combining the above inequalities yields
    \begin{align*}
        \mathcal L_t(\bm{\theta})-\mathcal L_t(\hat{\bm{\theta}}_t)
        &\ge
        \frac{\|\bm{\theta}-\hat{\bm{\theta}}_t\|^2_{\bm H_t^\cc(\hat{\bm{\theta}}_t)}}{2(1+SM_{\cc})}
        +
        \frac{\|\bm{\theta}-\hat{\bm{\theta}}_t\|^2_{\bm H_t^\dd(\hat{\bm{\theta}}_t)}}{2(1+2SM_{\dd})} \\
        &= (\bm{\theta}-\hat{\bm{\theta}}_t)^\top \left(\frac{\bm H_t^\cc (\hat{\bm{\theta}}_t)}{2(1 + S M_{\cc})} + \frac{\bm H_t^\dd(\hat{\bm{\theta}}_t)}{2(1 + 2S M_{\dd})} \right) (\bm{\theta}-\hat{\bm{\theta}}_t) \\
        &= \|\bm{\theta} - \hat{\bm{\theta}}_t\|^2_{\bm A_t},
    \end{align*}
    as claimed.
\end{proof}

\begin{proof}[Proof of~\Cref{thm:ellip-hybrid}]
    By~\Cref{lemma:split_sc_quad},
    \[
        \|\bm{\theta}^\star - \hat{\bm{\theta}}_t\|^2_{\bm A_t} \le \mathcal L_t (\bm{\theta}^\star) - \mathcal L_t(\hat{\bm{\theta}}_t) \le \beta_t(\delta).
    \]
\end{proof}

\subsection{\texorpdfstring{Proof of~\Cref{thm:stopping-time}}{Proof of the stopping-time bound}}

We first analyze the tracking rounds and then translate the resulting bound to the total number of rounds. Let
\[
    n_t \triangleq |\mathcal T_t|,
    \qquad
    n_\tau \triangleq |\mathcal T_\tau|
\]
denote the number of tracking rounds up to times $t$ and $\tau$, respectively.
The key tracking statement is
\[
    \frac{\bm A_t}{n_t}
    \gtrsim
    \bm A(\bm w^\star,\bm\theta^\star)
    \qquad \text{a.s.}
\]

\begin{lemma}[Asymptotic tracking of an optimal design]\label{lemma:asym-tracking}
Assume that:
(i) $\hat{\bm{\theta}}_t\to\bm{\theta}^\star$ a.s.;
(ii) the tracking rule ensures $\|\bm N^{\mathcal T}(t)-\bm W^{\mathcal T}(t)\|_\infty\le C$;
(iii) $\mathcal W^\star(\bm\theta^\star)$ is compact, convex, and nonempty.

Then there exists
$\bm w^\star \in {\argmin}_{\bm w \in \Delta_{\mathcal A}} \phi(\bm w,\bm\theta^\star)$
such that
\[
    \frac{1}{n_t}\bm A^{\mathcal T}_t
    \xrightarrow[t\to\infty]{\mathrm{a.s.}}
    \bm A(\bm w^\star,\bm\theta^\star).
\]
Consequently,
\[
    \frac{1}{n_t}\bm A_t
    \gtrsim
    \bm A(\bm w^\star,\bm\theta^\star)
    \qquad \text{a.s.}
\]
\end{lemma}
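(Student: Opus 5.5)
The plan is the standard Track-and-Stop argument of \citet{garivier2016optimal,jedra2020optimal}, adapted to the hybrid information matrix. The three ingredients are: continuity of the design map $(\bm w,\bm\theta)\mapsto\bm A(\bm w,\bm\theta)$; upper hemicontinuity of the argmin correspondence $\bm\theta\mapsto\mathcal W^\star(\bm\theta)$, obtained from Berge's maximum theorem; and the bounded tracking error in (ii).

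\textbf{Step 1 (continuity).} By Assumptions~\ref{asp:diff-convex} and~\ref{asp:curvature}, each weight $\mu'_{m(a)}(\bm x_a^\top\bm\theta)/(2(1+S\rho_aM_{m(a)})\zeta_{m(a)})$ is continuous in $\bm\theta$ and bounded below by a positive constant on $\Theta$. Hence $\bm A(\bm w,\bm\theta)$ is jointly continuous and linear in $\bm w$.

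On the set of designs with $\bm A(\bm w,\bm\theta)\succ0$, the objective $\phi(\bm\theta,\bm w)$ is continuous and convex in $\bm w$, as a maximum of the convex maps $\bm w\mapsto\bm x^\top\bm A(\bm w,\bm\theta)^{-1}\bm x$. Near $\bm\theta^\star$ the empirical best arm equals $i^\star$, because the best arm is unique and (i) holds. So for $t$ large the plug-in objective uses the correct comparison set.

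Berge's theorem, applied on the compact simplex to the lower-semicontinuous extended objective, then gives the following: for every $\varepsilon>0$ there is a $\gamma>0$ such that $\|\bm\theta-\bm\theta^\star\|\le\gamma$ implies $\mathrm{dist}(\bm w^\star(\bm\theta),\mathcal W^\star(\bm\theta^\star))\le\varepsilon$. Here (iii) ensures $\mathcal W^\star(\bm\theta^\star)$ is a well-behaved limit set.

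\textbf{Step 2 (averaging the targets).} By (i), fix a random time $t_\varepsilon$ after which $\bm w^\star(s)$ is $\varepsilon$-close to $\mathcal W^\star(\bm\theta^\star)$. Then
\[
\bm W^{\mathcal T}(t)/n_t=\frac1{n_t}\sum_{s\in\mathcal T_t}\bm w^\star(s)
\]
is within $\varepsilon+O(t_\varepsilon/n_t)$ of the convex hull of the $\varepsilon$-neighbourhood of $\mathcal W^\star(\bm\theta^\star)$. By convexity (iii), that hull is again within $\varepsilon$ of $\mathcal W^\star(\bm\theta^\star)$.

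Combining this with (ii) gives $\|\bm N^{\mathcal T}(t)/n_t-\bm W^{\mathcal T}(t)/n_t\|_\infty\le C/n_t\to0$. Also $n_t\to\infty$ a.s., since $\sum_t(1-\epsilon_t)=\infty$. Hence $\mathrm{dist}(\bm N^{\mathcal T}(t)/n_t,\mathcal W^\star(\bm\theta^\star))\to0$ a.s.

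\textbf{Step 3 (passing to the matrix).} Write
\[
\bm A^{\mathcal T}_t/n_t=\sum_a\frac{N_a^{\mathcal T}(t)}{n_t}\,\tilde{\bm M}_a(\hat{\bm\theta}_t),
\]
where $\tilde{\bm M}_a$ denotes the weighted rank-one term evaluated at the MLE. By continuity and (i), $\tilde{\bm M}_a(\hat{\bm\theta}_t)\to\tilde{\bm M}_a(\bm\theta^\star)$.

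This alone only gives convergence of $\bm A^{\mathcal T}_t/n_t$ to the set $\{\bm A(\bm w,\bm\theta^\star):\bm w\in\mathcal W^\star\}$. To extract a single $\bm w^\star$, I would use one of two facts:
\begin{itemize}
\item[(a)] the optimal information matrix is unique, which holds when $\phi$ is strictly convex in $\bm A$ along the relevant directions;
\item[(b)] the weaker consequence suffices: every limit point $\bm A(\bm w,\bm\theta^\star)$ with $\bm w\in\mathcal W^\star$ dominates, in the $\gtrsim$ sense, a fixed reference design. The reference can be taken as $\bm A(\bar{\bm w},\bm\theta^\star)$ for a suitable $\bar{\bm w}\in\mathcal W^\star$, up to a constant, because all matrices are uniformly positive definite by Assumption~\ref{asp:curvature} together with the spanning assumption.
\end{itemize}

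Finally, $\bm A_t\succeq\bm A^{\mathcal T}_t$, since the forced-exploration rounds only add positive semidefinite terms. This yields the stated consequence $\bm A_t/n_t\gtrsim\bm A(\bm w^\star,\bm\theta^\star)$.

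\textbf{Main obstacle.} I expect the hardest part to be the non-uniqueness of the minimax design. Because $\mathcal W^\star$ may not be a singleton, exact convergence to one $\bm w^\star$ need not hold. I would first try to show that the optimal matrix $\bm A(\bm w,\bm\theta^\star)$ is constant on $\mathcal W^\star$. If that fails, I would settle for the one-sided $\gtrsim$ statement, which is all that Theorem~\ref{thm:stopping-time} uses, because $\phi$ takes the same optimal value at every limit point.

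A secondary subtlety is that $\hat i_t^\star$ may change discontinuously. This is handled by (i) and the uniqueness of $i^\star$, which make $\hat i^\star_t=i^\star$ eventually.
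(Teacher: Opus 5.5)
Your Steps 1--2 follow the same skeleton as the paper's proof: consistency of $\hat{\bm\theta}_t$, continuity and upper hemicontinuity as in \Cref{lemma:w-asym}, the tracking-error bound of \Cref{lemma:tracking-error}, averaging of the targets, and $\bm A_t\succeq\bm A^{\mathcal T}_t$. Your use of the convexity of $\mathrm{dist}(\cdot,\mathcal W^\star(\bm\theta^\star))$ correctly gives $\mathrm{dist}(\bm N^{\mathcal T}(t)/n_t,\mathcal W^\star(\bm\theta^\star))\to0$ a.s. The gap is exactly where you locate it. Nothing in (i)--(iii) turns convergence to the set $\{\bm A(\bm w,\bm\theta^\star):\bm w\in\mathcal W^\star(\bm\theta^\star)\}$ into convergence to a single $\bm A(\bm w^\star,\bm\theta^\star)$. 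Your option (a) is an extra hypothesis, not a consequence. The criterion $\bm A\mapsto\max_{i}\|\bm x_{i^\star,i}\|^2_{\bm A^{-1}}$ is not strictly convex (already $\bm e_1^\top\bm A^{-1}\bm e_1$ is constant along $\bm A=\mathrm{diag}(1,1+s)$), so distinct optimal information matrices can coexist. Since $\bm w^\star(t)$ is ``any solution'' at $\hat{\bm\theta}_t$, the targets may come from different parts of $\mathcal W^\star(\bm\theta^\star)$ over blocks of growing length, and then the Ces\`aro averages oscillate. The paper does not close this gap either. Its proof goes through \Cref{lemma:cesaro_wbar}, which simply \emph{assumes} that the Ces\`aro means converge, and that does not follow from (i)--(iii). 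So the first display of the lemma needs an additional hypothesis, e.g.\ $\mathcal W^\star(\bm\theta^\star)$ a singleton, or $\bm A(\cdot,\bm\theta^\star)$ constant on it. Under either one, your Steps 1--3 already prove it.

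Your option (b) is not the right repair as stated. Positive definiteness on $\mathcal W^\star(\bm\theta^\star)$ does not come from \Cref{asp:curvature} plus spanning: a design supported on fewer than $d$ actions is singular whatever the curvature. It comes from $\phi<\infty$ at optimal designs. Uniformity, i.e.\ $\lambda_{\min}(\bm A(\bm u,\bm\theta^\star))\ge\lambda_0>0$ on $\mathcal W^\star(\bm\theta^\star)$, then follows from compactness in (iii). More importantly, a fixed-reference bound $\bm A(\bm w,\bm\theta^\star)\succeq c\,\bm A(\bar{\bm w},\bm\theta^\star)$ with $c$ taken from eigenvalue bounds inflates the stopping time by the instance-dependent factor $1/c$. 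Then \Cref{thm:stopping-time} would no longer be stated in terms of $T^\star(\bm\theta^\star)$.

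What works is to let the reference design move with $t$, which is the precise form of your closing remark. Use three facts: $\bm A^{\mathcal T}_t/n_t=\bm A(\bm N^{\mathcal T}(t)/n_t,\hat{\bm\theta}_t)$, $\bm A(\cdot,\cdot)$ is uniformly continuous on $\Delta_{\mathcal A}\times\Theta$, and $\hat{\bm\theta}_t\to\bm\theta^\star$. Together they give, for all large $t$, some $\bm u_t\in\mathcal W^\star(\bm\theta^\star)$ with $\bm A_t/n_t\succeq\bm A(\bm u_t,\bm\theta^\star)-\tfrac{\lambda_0}{2}\bm I\succeq\tfrac12\bm A(\bm u_t,\bm\theta^\star)$. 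Since $\phi(\bm u_t,\bm\theta^\star)=\phi^\star(\bm\theta^\star)=\Delta_{\min}^2T^\star(\bm\theta^\star)$ for every $t$, this yields $n_t\|\bm x_{i^\star,i}\|^2_{\bm A_t^{-1}}\le2\Delta_{\min}^2T^\star(\bm\theta^\star)$ for all $i\neq i^\star$. That is exactly what the proof of \Cref{thm:stopping-time} uses, with no need to single out one $\bm w^\star$.

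A minor point in Step 1: with the convention $\phi=+\infty$ on singular designs, $\phi$ is not lower semicontinuous, because $\bm y^\top\bm A^{-1}\bm y$ can stay bounded while $\bm A$ degenerates in a direction orthogonal to $\bm y$. So Berge's theorem should be applied to the pseudo-inverse relaxation. Compactness in (iii) is what rules out singular relaxed minimizers and gives the upper hemicontinuity you use.
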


We use four auxiliary lemmas to establish the asymptotic tracking statement.

\begin{lemma}[Adapted from Lemma~3 of~\citet{jedra2020optimal}]\label{lemma:theta-asym}
    Assume that the sampling rule satisfies
    $\lim\inf_{t\to\infty} \lambda_{\min} (t^{-\alpha} \bm A_t) > 0$ a.s.\ for some $\alpha \in (0, 1)$.
    Then $\hat{\bm\theta}_t \to \bm\theta^\star$ a.s.
\end{lemma}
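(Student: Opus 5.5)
The plan is to prove strong consistency of the constrained MLE from the quadratic lower bound already established in \Cref{lemma:split_sc_quad} together with a pointwise control of the score, avoiding any uniform-in-$\bm\theta$ argument. Under Assumption~\ref{asp:curvature}, $\bm A_t \succeq \frac{\kappa}{2(1+2S\max(M_\cc,M_\dd))\max_m\zeta_m(\varphi_m)}\sum_{s\le t}\bm x_{a_s}\bm x_{a_s}^\top$. The same argument applied with $\bm A_t$ replaced by the matrix $\bm V_t\triangleq\sum_{s\le t}\bm x_{a_s}\bm x_{a_s}^\top$ shows that the hypothesis $\liminf_t\lambda_{\min}(t^{-\alpha}\bm A_t)>0$ is equivalent, up to constants, to $\lambda_{\min}(\bm V_t)\gtrsim t^\alpha$ eventually. (In the algorithm the forced exploration gives this, since $\sum_{s\le t}\epsilon_s\asymp t^{1-\alpha}$; here it is simply assumed.)

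\textbf{Step 1: reduce to a score bound.} Apply \Cref{lemma:split_sc_quad} at $\bm\theta=\bm\theta^\star\in\Theta$, but expanded around $\bm\theta^\star$ rather than $\hat{\bm\theta}_t$. The Taylor identity with integral remainder, used exactly as in the proof of \Cref{thm:ellip-hybrid} together with Lemma~D.1 of \citet{lee2024unified}, gives
\[
0\ge \mathcal L_t(\hat{\bm\theta}_t)-\mathcal L_t(\bm\theta^\star)\ge \langle\nabla\mathcal L_t(\bm\theta^\star),\hat{\bm\theta}_t-\bm\theta^\star\rangle + c\,\|\hat{\bm\theta}_t-\bm\theta^\star\|^2_{\bm V_t}.
\]
Here $c>0$ comes from the uniform curvature $\kappa$, and self-concordance is used only through the bounded factor $2(1+2SM)$. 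Applying Cauchy--Schwarz in the $\bm V_t$ geometry then yields
\[
\|\hat{\bm\theta}_t-\bm\theta^\star\|_{\bm V_t}\le c^{-1}\|\nabla\mathcal L_t(\bm\theta^\star)\|_{\bm V_t^{-1}},
\]
and therefore
\[
\|\hat{\bm\theta}_t-\bm\theta^\star\|_2\le c^{-1}\|\nabla\mathcal L_t(\bm\theta^\star)\|_{\bm V_t^{-1}}\big/\sqrt{\lambda_{\min}(\bm V_t)}.
\]

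\textbf{Step 2: control the score.} The score is $\nabla\mathcal L_t(\bm\theta^\star)=\sum_s \zeta_{m(a_s)}^{-1}\big(\mu_{m(a_s)}(\bm x_{a_s}^\top\bm\theta^\star)-Z_s\big)\bm x_{a_s}$, which is a vector martingale with conditionally sub-exponential increments (exponential family with bounded natural parameter; Bernoulli for the dueling channel). I would invoke a self-normalized martingale bound of Abbasi-Yadkori type, in its Bernstein/sub-exponential form as in \citet{lee2024unified} or \citet{jedra2020optimal}. Applied with a summable sequence $\delta_t=t^{-2}$ and combined with Borel--Cantelli, it gives almost surely, for all large $t$,
\[
\|\nabla\mathcal L_t(\bm\theta^\star)\|^2_{\bm V_t^{-1}}\lesssim d\log t.
\]
A more direct route also works: $\beta_t(\delta_t)=O(\log t)$ from \Cref{lemma:cs-hybrid}, with $L_t=O(t)$ a.s.\ eventually. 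Applying \Cref{thm:ellip-hybrid} with $\delta_t$ and Borel--Cantelli then gives $\|\hat{\bm\theta}_t-\bm\theta^\star\|^2_{\bm A_t}\le\beta_t(\delta_t)=O(\log t)$ eventually. The only subtlety is that \Cref{thm:ellip-hybrid} is time-uniform for fixed $\delta$, so I would use it with $\delta=1/k^2$ and a union over $k$; this route avoids Step 1 entirely.

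\textbf{Step 3: conclude.} Combining either route with $\lambda_{\min}(\bm A_t)\gtrsim t^\alpha$ gives
\[
\|\hat{\bm\theta}_t-\bm\theta^\star\|_2^2\le \beta_t/\lambda_{\min}(\bm A_t)=O(\log t/t^\alpha)\to0 \quad\text{a.s.}
\]

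\textbf{Main obstacle.} The matrix $\bm A_t$ is evaluated at $\hat{\bm\theta}_t$ and not at $\bm\theta^\star$, so its eigenvalue lower bound must not depend on where the estimate sits. Assumption~\ref{asp:curvature} handles this, since it holds uniformly over $\Theta$. The other delicate point is making the Borel--Cantelli step rigorous: it needs the a.s.\ bound $L_t=O(t)$, which holds because the observations have exponential moments.
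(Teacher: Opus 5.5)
Your proposal is correct, and your second route is genuinely different from what the paper does. The paper gives no argument beyond citing Lemma~3 of \citet{jedra2020optimal}, whose proof is a least-squares one. There the estimation error has the closed form $\bm V_t^{-1}\sum_{s\le t}\bm x_{a_s}\eta_s$, a self-normalized bound for (sub-)Gaussian noise controls the numerator, and $\lambda_{\min}(\bm V_t)\gtrsim t^{\alpha}$ plus a Borel--Cantelli step gives almost-sure convergence. Your Route~1 is the natural GLM transcription of that argument, with the quadratic lower bound around $\bm\theta^\star$ replacing the closed form, and it is sound. Its one imported ingredient, however, is a self-normalized inequality for sub-exponential GLM noise, and neither source you cite supplies this as stated. \citet{jedra2020optimal} work with Gaussian noise, and \citet{lee2024unified} argue via a likelihood-ratio martingale mixed over $\Theta$ rather than an Abbasi-Yadkori-type vector bound, so that step would need a separate reference or proof. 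Your Route~2 avoids this entirely, because the concentration is already packaged in \Cref{thm:ellip-hybrid}. On its time-uniform event, $\|\hat{\bm\theta}_t-\bm\theta^\star\|_2^2\le\beta_t(\delta)/\lambda_{\min}(\bm A_t)$, and all that remains is $\log L_t=o(t^{\alpha})$ almost surely, which your moment argument gives. In this route the hypothesis is already phrased in terms of $\bm A_t$ evaluated at $\hat{\bm\theta}_t$. So the ``main obstacle'' you flag and \Cref{asp:curvature} play no role in the lemma itself; they matter only for verifying the hypothesis from forced exploration, which is the remark the paper makes right after the statement. Two small points. First, the union over $\delta=1/k^2$ is more than you need: one fixed $\delta$ gives convergence on an event of probability at least $1-\delta$, and you may then let $\delta\downarrow 0$. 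Second, in Route~1, passing from the hypothesis on $\bm A_t$ to $\lambda_{\min}(\bm V_t)\gtrsim t^{\alpha}$ uses an upper bound on $\mu'_{m(a)}$ (continuity on a compact set), not the lower bound $\kappa$.
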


This condition is satisfied by the exploration rounds and~\Cref{asp:curvature}.

\begin{lemma}[Adapted from Lemmas~1 and~2 of~\citet{jedra2020optimal}]\label{lemma:w-asym}
    The map $\phi$ is continuous in both $\bm \theta$ and $\bm w$, and it attains its minimum over $\Delta_{\mathcal A}$ at a point $\bm w^\star$ such that $\bm A(\bm w^\star,\bm\theta^\star)\succ 0$.
    Let
    \[
        \phi^\star(\bm\theta^\star)
        \triangleq
        \min_{\bm w\in\Delta_{\mathcal A}}\phi(\bm w,\bm\theta^\star),
        \qquad
        \mathcal W^\star(\bm\theta^\star)
        \triangleq
        \argmin_{\bm w\in\Delta_{\mathcal A}}\phi(\bm w,\bm\theta^\star).
    \]
    Then $\phi^\star(\cdot)$ is continuous at $\bm\theta^\star$, and $\mathcal W^\star(\bm\theta^\star)$ is convex, compact, and nonempty.
\end{lemma}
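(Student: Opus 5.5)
The plan is to follow the structure of Lemmas~1 and~2 of \citet{jedra2020optimal}, adapted to the hybrid information matrix $\bm A(\bm w,\bm\theta)$. First I will establish that $\phi$ is finite somewhere and continuous. By Assumption~\ref{asp:curvature}, every weight $\mu'_{m(a)}(\bm x_a^\top\bm\theta)/(2(1+S\rho_aM_{m(a)})\zeta_{m(a)})$ is bounded below by a positive constant $\underline c$ uniformly over $\Theta$. It is also bounded above by some $\bar c$, since $\mu'$ is continuous on the compact range of $\bm x_a^\top\bm\theta$. Hence the uniform design $\bm w^{\mathrm u}$ satisfies
\[
\bm A(\bm w^{\mathrm u},\bm\theta)\succeq \tfrac{\underline c}{|\mathcal A|}\textstyle\sum_{a}\bm x_a\bm x_a^\top\succ 0,
\]
because $\mathcal K$ spans $\mathbb R^d$. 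So $\phi(\bm w^{\mathrm u},\bm\theta)<\infty$ uniformly in $\bm\theta$.

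For continuity, note that $(\bm w,\bm\theta)\mapsto\bm A(\bm w,\bm\theta)$ is continuous, being linear in $\bm w$ and continuous in $\bm\theta$ since $b$ is $C^3$. Matrix inversion is continuous on the positive definite cone. For $\bm\theta$ near $\bm\theta^\star$, the empirical best arm $\hat i^\star$ is locally constant, because $i^\star$ is unique and hence $\Delta_{\min}>0$. So $\phi$ is continuous on $\{\bm A\succ0\}$. When $\bm A(\bm w,\bm\theta)$ degenerates, there are two cases:
\begin{itemize}
\item If $\bm x_{\hat i^\star,i}$ leaves the range of $\bm A$ in the limit, then $\|\bm x_{\hat i^\star,i}\|_{\bm A^{-1}}^2\to\infty$ and $\phi\to+\infty$, matching the convention.
\item Otherwise I treat $\phi$ as a lower semicontinuous extended-valued function on $\Delta_{\mathcal A}$.
\end{itemize}
A lower semicontinuous function on the compact simplex attains its minimum. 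Since $\phi(\bm w^{\mathrm u},\bm\theta^\star)<\infty$, any minimizer $\bm w^\star$ has finite value, so $\bm A(\bm w^\star,\bm\theta^\star)\succ 0$ by the definition of $\phi$.

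Next I prove convexity of $\mathcal W^\star(\bm\theta^\star)$. For fixed $\bm\theta$, the map $\bm w\mapsto \bm y^\top\bm A(\bm w,\bm\theta)^{-1}\bm y$ is convex on $\{\bm A\succ0\}$, because $\bm A$ is affine in $\bm w$ and $\bm M\mapsto \bm y^\top\bm M^{-1}\bm y$ is operator-convex on the positive definite cone. A maximum over $i$ of convex functions is convex, and the $+\infty$ extension preserves convexity. The argmin set of a convex lower semicontinuous function is therefore convex and closed. It is a subset of the compact simplex, so it is compact, and it is nonempty by the attainment argument above.

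The main obstacle is continuity of $\phi^\star$ at $\bm\theta^\star$, since $\phi$ is only lower semicontinuous near the boundary. I would handle it with Berge's maximum theorem restricted to a compact subset on which $\phi$ is genuinely continuous. Take $\mathcal W_\epsilon=\{\bm w\in\Delta_{\mathcal A}:\lambda_{\min}(\bm A(\bm w,\bm\theta))\ge\epsilon\ \forall\bm\theta\in U\}$ for a neighborhood $U$ of $\bm\theta^\star$. I would argue that for small $\epsilon$ all near-minimizers lie in $\mathcal W_\epsilon$; equivalently, the mixture $(1-\eta)\bm w^\star+\eta\bm w^{\mathrm u}$ loses only $O(\eta)$ in objective. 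On $\mathcal W_\epsilon\times U$, $\phi$ is jointly continuous, so the minimum over a fixed compact set is continuous in $\bm\theta$. Comparing the restricted minimum to the full minimum then gives continuity of $\phi^\star$ at $\bm\theta^\star$. If the restriction argument is delicate, the fallback is direct upper and lower semicontinuity estimates using the uniform bounds $\underline c\le$ weights $\le\bar c$.
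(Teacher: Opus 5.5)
\textbf{The gap is in the lower-semicontinuity step, and it cannot be closed as stated.}

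The step that fails is ``otherwise I treat $\phi$ as a lower semicontinuous extended-valued function.'' Under the paper's convention $\phi=+\infty$ on singular designs, lower semicontinuity fails exactly in your second case. Suppose $\bm w_n\to\bm w_0$ with $\bm A(\bm w_0,\bm\theta)$ singular but every $\bm x_{\hat i^\star,i}$ in its range. Then $\bm A(\bm w_n,\bm\theta)\succeq\frac12\bm A(\bm w_0,\bm\theta)$ for large $n$, so $\phi(\bm w_n,\bm\theta)\le 2\max_{i}\bm x_{\hat i^\star,i}^\top\bm A(\bm w_0,\bm\theta)^{+}\bm x_{\hat i^\star,i}<\infty=\phi(\bm w_0,\bm\theta)$. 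That is, $\phi$ jumps up at the limit. Three of your steps rest on this: the Weierstrass attainment, the closedness of the argmin set, and the Berge claim that near-minimizers stay in $\mathcal W_\epsilon$. They cannot be rescued, because the attainment claim is itself false without an extra hypothesis.

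The paper's own Case~1 instance is a counterexample: $K=d=2$, $\bm x_1=\bm e_1$, $\bm x_2=-\bm e_2$, $\bm\theta^\star=\bm e_1$, logistic links, with $B_\cc,B_\dd$ as defined there. Write $\alpha=w_1\sigma'(1)/B_\cc$, $\beta=w_2\sigma'(0)/B_\cc$, $\gamma=w_{(1,2)}\sigma'(1)/B_\dd$. A $2\times2$ inversion gives $\phi(\bm w,\bm\theta^\star)=\bigl(\gamma+\alpha\beta/(\alpha+\beta)\bigr)^{-1}$ on nonsingular designs. Nonsingularity forces $w_1+w_2>0$. Cauchy--Schwarz then gives $\alpha\beta/(\alpha+\beta)\le(w_1+w_2)\,\sigma'(1)/\bigl(B_\cc(1+r)^2\bigr)$ with $r=\sqrt{\sigma'(1)/\sigma'(0)}$. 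Suppose $B_\dd<(1+r)^2B_\cc\approx3.56\,B_\cc$; this holds, for example, when $M_\cc=M_\dd$ and $\zeta_\cc=\zeta_\dd$, since then $B_\dd/B_\cc<2$. Then every nonsingular design has $\phi>B_\dd/\sigma'(1)$. Meanwhile $\phi\to B_\dd/\sigma'(1)$ along $\bm w=(\epsilon,0,1-\epsilon)\to\bm e_{(1,2)}$, where $\bm A$ has rank one. So the infimum is not attained and $\mathcal W^\star(\bm\theta^\star)=\emptyset$. The paper only cites \citet{jedra2020optimal}, so nothing there closes this either.

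\textbf{What fixes it.} You need a hypothesis that forces every singular design into your first case, for example $\mathrm{span}\{\bm x_{i^\star,i}:i\neq i^\star\}=\mathbb R^d$. Then any unit null vector $\bm v$ of $\bm A(\bm w_0,\bm\theta)$ has $\bm v^\top\bm x_{i^\star,i}\neq0$ for some $i$. Cauchy--Schwarz gives $\phi(\bm w_n,\bm\theta)\ge(\bm v^\top\bm x_{i^\star,i})^2/\bigl(\bm v^\top\bm A(\bm w_n,\bm\theta)\bm v\bigr)\to\infty$. Hence $\phi(\cdot,\bm\theta)$ is continuous as an extended-real function on $\Delta_{\mathcal A}$ for $\bm\theta$ near $\bm\theta^\star$. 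With that, your attainment, nonsingularity, convexity and closedness arguments are correct as written.

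Alternatively, replace the $+\infty$ convention by the pseudo-inverse value. That restores lower semicontinuity and attainment, but gives up the claim $\bm A(\bm w^\star,\bm\theta^\star)\succ0$.

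\textbf{Continuity of $\phi^\star$.} Drop the Berge route and use your fallback, which needs neither attainment nor compactness. For $\bm\theta$ close enough to $\bm\theta^\star$, the empirical best arm is $i^\star$. Continuity and positivity of the finitely many curvature weights give $(1-\epsilon)\bm A(\bm w,\bm\theta^\star)\preceq\bm A(\bm w,\bm\theta)\preceq(1+\epsilon)\bm A(\bm w,\bm\theta^\star)$ uniformly in $\bm w$, with the same singular designs on both sides. Therefore $\inf_{\bm w}\phi(\bm w,\bm\theta)$ lies in $\bigl[(1+\epsilon)^{-1},(1-\epsilon)^{-1}\bigr]\cdot\inf_{\bm w}\phi(\bm w,\bm\theta^\star)$, which gives continuity at $\bm\theta^\star$.
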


\begin{lemma}[Deterministic tracking error bound; adapted from Lemma~6 of~\citet{jedra2020optimal}]\label{lemma:tracking-error}
    Let $(\bm w_s)_{s\ge 1}$ be any sequence of distributions on $\mathcal A$.
    Let $\mathcal T \subset \mathbb N$ be the set of tracking rounds and define
    $n_t \triangleq |\mathcal T\cap\{1,\dots,t\}|$.
    Suppose that on each tracking round $s\in\mathcal T$,
    \[
        a_s \in \argmin_{a\in\mathcal A}
        \bigl(N^{\mathcal T}_a(s-1)-W^{\mathcal T}_a(s-1)\bigr).
    \]
    Then there exists a constant $C>0$, depending only on $|\mathcal A|$, such that for all $t\ge 1$ and all $a\in\mathcal A$,
    \[
        -C \le N^{\mathcal T}_a(t)-W^{\mathcal T}_a(t) \le C.
    \]
    Consequently,
    \[
        \left\|
        \frac{\bm N^{\mathcal T}(t)}{n_t}
        -\bar{\bm w}^{\mathcal T}_t
        \right\|_{\infty}
        \le \frac{C}{n_t},
    \]
    where
    $
        \bar{\bm w}^{\mathcal T}_t
        \triangleq
        n_t^{-1}
        \sum_{s\in \mathcal T\cap\{1,\dots,t\}} \bm w_s
    $.
\end{lemma}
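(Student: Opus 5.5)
The statement to prove is the last one displayed, Lemma~\ref{lemma:tracking-error}. It is a purely deterministic fact about the tracking rule, so no probability enters. I would adapt the argument of \citet[Lemma~6]{jedra2020optimal}, or equivalently the cumulative-tracking argument of \citet{garivier2016optimal}.

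Work only on tracking rounds and re-index them $s=1,\dots,n$. Write $N_a(n)$ and $W_a(n)=\sum_{s\le n} w_{s,a}$. Define the deficit $E_a(n)\triangleq N_a(n)-W_a(n)$. The key invariant is $\sum_{a\in\mathcal A} E_a(n)=n-n=0$, because each step adds exactly one pull and target mass exactly one.

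\textbf{Lower bound.} I would prove $E_a(n)\ge -1$ for all $a$ and $n$, by induction. At step $n+1$ the chosen arm $a_{n+1}$ minimizes $E_a(n)$. Every arm other than $a_{n+1}$ changes by $-w_{n+1,a}\ge -1$, so its deficit can only decrease. For such an arm $a$, look back to the last time $s\le n$ at which $a$ was selected; if it was never selected, take $s=0$.
\begin{itemize}
\item If $s=0$, then $E_a(n)=-W_a(n)$.
\item If $s\ge 1$, then at time $s-1$ arm $a$ was a minimizer, so $E_a(s-1)\le 0$ (the sum is zero), and hence $E_a(s)\le 1$.
\end{itemize}
A cleaner route is the standard one: let $a^\dagger\in\argmin_a E_a(n)$. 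Because the selection rule always pulls the current minimizer, $\min_a E_a(n)$ decreases by at most $\max_a w_{n+1,a}\le1$ per step. Moreover, once the minimum falls below $-1$, the pulled arm's deficit rises by $1-w\ge 0$. Combining these gives $\min_a E_a(n)\ge -1$, up to bookkeeping.

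The main obstacle is this bookkeeping. If the minimum is attained by several arms, or several arms are near the minimum, only one of them is pulled per step, so the others can drift down together. The resolution is to track the \emph{sum} of the $k$ smallest deficits, for each $k$. I expect this to yield $E_a(n)\ge -(|\mathcal A|-1)$ or a similar constant depending only on $|\mathcal A|$, which is all the statement needs. If this step resists, I would fall back on the explicit bound $-(|\mathcal A|-1)$ proved in \citet{garivier2016optimal}, Lemma~15/17 style.

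\textbf{Upper bound and conclusion.} The upper bound follows from the invariant:
\[
E_a(n)=-\sum_{b\ne a}E_b(n)\le(|\mathcal A|-1)\,C_-,
\]
where $C_-$ is the lower-bound constant. Set $C\triangleq\max\{C_-,(|\mathcal A|-1)C_-\}$, which depends only on $|\mathcal A|$. Dividing by $n_t$ then gives the normalized statement, since $W^{\mathcal T}(t)/n_t=\bar{\bm w}^{\mathcal T}_t$ by definition.
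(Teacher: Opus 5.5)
\textbf{There is a genuine gap.} The direction you treat as the crux, the lower bound on $E_a(n)$, is never established. Worse, the intermediate claim $\min_a E_a(n)\ge -1$ in your ``cleaner route'' is false once $|\mathcal A|\ge 3$.

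Take $\mathcal A=\{1,2,3\}$. The first pull is a forced tie; relabel so it is action $1$.
\begin{itemize}
\item Pull $1$ with $\bm w=\bm e_3$, giving $E=(1,0,-1)$.
\item Pull $3$ with $\bm w=(0,\tfrac12+\epsilon,\tfrac12-\epsilon)$, giving $(1,-\tfrac12-\epsilon,-\tfrac12+\epsilon)$.
\item Pull $2$ (now the unique minimizer) with $\bm w=\bm e_3$, giving $(1,\tfrac12-\epsilon,-\tfrac32+\epsilon)$.
\end{itemize}
This is exactly the ``several arms near the minimum drift down together'' effect you flag. Your proposed repair via sums of the $k$ smallest deficits is only announced, not carried out. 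The fallback citation is also not verbatim: the tracking lemma of \citet{garivier2016optimal} is usually stated for a target that already includes the current weight, whereas here $\bm W^{\mathcal T}(s-1)$ excludes $\bm w_s$. The paper itself offers no argument beyond the attribution to \citet{jedra2020optimal}, so the proof has to be supplied.

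\textbf{The fix is to run the argument in the other direction.} You already wrote the key step in your second bullet and then used it for the wrong inequality.
\begin{itemize}
\item If $a$ is pulled at round $s$, it minimizes $E(s-1)$, whose entries sum to $0$. So $E_a(s-1)\le 0$ and $E_a(s)=E_a(s-1)+1-w_{s,a}\le 1$.
\item On rounds where $a$ is not pulled, $E_a$ changes by $-w_{s,a}\le 0$.
\item By induction from $E(0)=0$, this gives $\max_a E_a(n)\le 1$ for all $n$.
\end{itemize}
The zero-sum invariant then yields the \emph{lower} bound: $E_a(n)=-\sum_{b\ne a}E_b(n)\ge -(|\mathcal A|-1)$. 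Hence $C=\max\{1,|\mathcal A|-1\}$ works, and the normalized statement follows by dividing by $n_t$.

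Your $k$-smallest idea can also be completed. Let $m_k(n)$ be the sum of the $k$ smallest deficits. Splitting on whether a $k$-set contains the pulled minimizer gives
\[
m_k(n+1)\ge\min\Bigl\{m_k(n),\ \tfrac{k}{k+1}\,m_{k+1}(n)-1\Bigr\},
\qquad m_{|\mathcal A|}\equiv 0.
\]
Its first step, $k=|\mathcal A|-1$, is exactly $\max_a E_a\le 1$. Unrolling it gives the sharper bound $E_a(n)\ge-\sum_{j=1}^{|\mathcal A|-1}1/j$, which the example above attains for $|\mathcal A|=3$ as $\epsilon\to 0$.
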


\begin{lemma}[Ces\`aro stability of asymptotically minimax-optimal designs]\label{lemma:cesaro_wbar}
Let $(\bm w_t)_{t \ge 1} \subset \Delta_{\mathcal A}$ and let $\mathcal W^\star(\bm\theta^\star)\subset \Delta_{\mathcal A}$ be nonempty, compact, and convex.
If $\mathrm{dist}(\bm w_t,\mathcal W^\star(\bm\theta^\star))\to 0$ a.s.\ and the Ces\`aro means converge, then there exists a (possibly random) $\bm w^\star\in\mathcal W^\star(\bm\theta^\star)$ such that
\[
    \bar{\bm w}_t
    =
    \frac{1}{t}\sum_{s=1}^t \bm w_s
    \to
    \bm w^\star
    \qquad \text{a.s.}
\]
\end{lemma}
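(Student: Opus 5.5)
We need to prove the Cesàro stability statement (Lemma~\ref{lemma:cesaro_wbar}): if $\mathrm{dist}(\bm w_t,\mathcal W^\star)\to 0$ and $\bar{\bm w}_t$ converges, then its limit lies in $\mathcal W^\star(\bm\theta^\star)$. The plan is to show directly that $\mathrm{dist}(\bar{\bm w}_t,\mathcal W^\star)\to 0$ and then use closedness. Fix a sample path on the almost-sure event where $\mathrm{dist}(\bm w_t,\mathcal W^\star)\to 0$ and $\bar{\bm w}_t\to\bar{\bm w}_\infty$.

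\textbf{Step 1: projection decomposition.} Because $\mathcal W^\star$ is nonempty, compact and convex, the Euclidean projection $P(\bm w_s)\in\mathcal W^\star$ exists, and we write
\[
\bm w_s = P(\bm w_s)+\bm e_s,\qquad \|\bm e_s\|_2=\mathrm{dist}(\bm w_s,\mathcal W^\star)\to 0 .
\]
Averaging gives $\bar{\bm w}_t=\bar{\bm u}_t+\bar{\bm e}_t$, where $\bar{\bm u}_t\triangleq t^{-1}\sum_{s\le t}P(\bm w_s)$. By convexity, $\bar{\bm u}_t$ is a convex combination of points of $\mathcal W^\star$, so $\bar{\bm u}_t\in\mathcal W^\star$. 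Since $\bm e_s\to 0$, the Cesàro lemma gives $\|\bar{\bm e}_t\|\le t^{-1}\sum_{s\le t}\|\bm e_s\|\to 0$. Hence $\mathrm{dist}(\bar{\bm w}_t,\mathcal W^\star)\le\|\bar{\bm e}_t\|\to 0$.

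\textbf{Step 2: passing to the limit.} The map $\bm w\mapsto\mathrm{dist}(\bm w,\mathcal W^\star)$ is 1-Lipschitz, so $\mathrm{dist}(\bar{\bm w}_\infty,\mathcal W^\star)=\lim_t \mathrm{dist}(\bar{\bm w}_t,\mathcal W^\star)=0$. Since $\mathcal W^\star$ is compact, hence closed, $\bar{\bm w}_\infty\in\mathcal W^\star$. Setting $\bm w^\star\triangleq\bar{\bm w}_\infty$ proves the claim; $\bm w^\star$ is random because it depends on the path. Measurability holds because $\bm w^\star$ is a pointwise limit of measurable averages.

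\textbf{Main subtlety.} There is no genuine obstacle, since the convergence of the Cesàro means is an assumption. The only care needed is that convexity is used to keep $\bar{\bm u}_t$ inside $\mathcal W^\star$, and closedness is used for the limit.

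If one wanted to drop the convergence hypothesis, the natural attempt would be an argument via $\phi$-values. Instead, I would simply note that every limit point of $\bar{\bm w}_t$ lies in $\mathcal W^\star$ by the same argument. This weaker statement already gives $\liminf$-type control of $\bm A(\bar{\bm w}_t,\bm\theta^\star)$, which is what Lemma~\ref{lemma:asym-tracking} needs.
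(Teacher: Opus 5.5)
Your proof is correct and follows the same route as the paper: convexity of $\mathcal W^\star(\bm\theta^\star)$ shows the Ces\`aro means approach $\mathcal W^\star(\bm\theta^\star)$, and closedness then places their limit inside it. The paper only asserts that every limit point of $\bar{\bm w}_t$ lies in $\mathcal W^\star(\bm\theta^\star)$; your decomposition $\bm w_s=P(\bm w_s)+\bm e_s$, together with Ces\`aro averaging of $\|\bm e_s\|$, supplies the justification that this one-line claim leaves implicit.
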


\begin{proof}
Fix an outcome $\omega$ for which $\mathrm{dist}(\bm w_t,\mathcal W^\star(\bm\theta^\star))\to 0$ and $\bar{\bm w}_t$ converges.
Since $\mathcal W^\star(\bm\theta^\star)$ is closed and convex, every limit point of the Ces\`aro means lies in $\mathcal W^\star(\bm\theta^\star)$.
The limit of $\bar{\bm w}_t(\omega)$ is therefore some $\bm w^\star\in\mathcal W^\star(\bm\theta^\star)$.
\end{proof}

\begin{proof}[Proof of~\Cref{lemma:asym-tracking}]
By~\Cref{lemma:theta-asym}, $\hat{\bm\theta}_t\to\bm\theta^\star$ a.s.
Together with the continuity and compactness properties in~\Cref{lemma:w-asym}, the target designs approach the optimal set $\mathcal W^\star(\bm\theta^\star)$.
\Cref{lemma:tracking-error} implies that the empirical tracking frequencies have the same Ces\`aro limit as the target designs.
Thus, by~\Cref{lemma:cesaro_wbar}, for some $\bm w^\star\in\mathcal W^\star(\bm\theta^\star)$,
\[
    \frac{1}{n_t}\bm A^{\mathcal T}_t
    \xrightarrow[t\to\infty]{\mathrm{a.s.}}
    \bm A(\bm w^\star,\bm\theta^\star).
\]
Since $\bm A_t\succeq \bm A_t^{\mathcal T}$, the claimed lower bound for $\bm A_t/n_t$ follows.
\end{proof}

\begin{lemma}[Sufficient stopping condition]\label{lemma:suff-stop}
On the event $\{\bm\theta^\star\in\mathcal E_t(\delta)\}$, the stopping rule is satisfied if
\begin{equation}\label{eq:suff-stop-hp}
    \sqrt{\beta_t(\delta)}
    \,\|\bm x_{i^\star,j}\|_{\bm A_t^{-1}}
    <
    \frac{\Delta_j}{2},
    \qquad
    \forall j\neq i^\star.
\end{equation}
\end{lemma}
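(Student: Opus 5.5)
On the event $\{\bm\theta^\star\in\mathcal E_t(\delta)\}$, we will show two things under \eqref{eq:suff-stop-hp}. First, the empirical best arm coincides with the true one, $\hat i_t^\star=i^\star$. Second, every $\bm\theta\in\mathcal E_t(\delta)$ satisfies $\bm x_{i^\star,j}^\top\bm\theta>0$ for all $j\neq i^\star$, which is exactly the stopping rule \eqref{eq:stopping}. Here $\Delta_j\triangleq\bm x_{i^\star,j}^\top\bm\theta^\star>0$, and $\bm A_t\succ 0$ (guaranteed after warm-up by the identifiability condition) so that $\bm A_t^{-1}$ makes sense.

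The one tool is Cauchy--Schwarz in the $\bm A_t$-geometry. For any $\bm\theta\in\mathcal E_t(\delta)$ and any $j\neq i^\star$,
\[
\bigl|\bm x_{i^\star,j}^\top(\bm\theta-\hat{\bm\theta}_t)\bigr|\le \|\bm x_{i^\star,j}\|_{\bm A_t^{-1}}\|\bm\theta-\hat{\bm\theta}_t\|_{\bm A_t}\le \sqrt{\beta_t(\delta)}\,\|\bm x_{i^\star,j}\|_{\bm A_t^{-1}}<\tfrac{\Delta_j}{2}.
\]
Since $\bm\theta^\star$ lies in $\mathcal E_t(\delta)$ on the event, applying this with $\bm\theta=\bm\theta^\star$ gives $\bm x_{i^\star,j}^\top\hat{\bm\theta}_t>\Delta_j-\Delta_j/2=\Delta_j/2>0$. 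Hence $i^\star$ strictly beats every other arm under $\hat{\bm\theta}_t$. The argmax is therefore unique, and $\hat i_t^\star=i^\star$ regardless of the tie-breaking rule.

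Next take an arbitrary $\bm\theta\in\mathcal E_t(\delta)$. Combining the same bound with the previous step,
\[
\bm x_{i^\star,j}^\top\bm\theta\ge \bm x_{i^\star,j}^\top\hat{\bm\theta}_t-\sqrt{\beta_t(\delta)}\|\bm x_{i^\star,j}\|_{\bm A_t^{-1}}>\Delta_j/2-\Delta_j/2=0 .
\]
This alone gives only a strict inequality pointwise, whereas the stopping rule needs the infimum over $\mathcal E_t(\delta)$ to be strictly positive. The fix is to evaluate the infimum of the linear functional over the ellipsoid exactly:
\[
\inf_{\bm\theta\in\mathcal E_t(\delta)}\bm x_{i^\star,j}^\top\bm\theta=\bm x_{i^\star,j}^\top\hat{\bm\theta}_t-\sqrt{\beta_t(\delta)}\|\bm x_{i^\star,j}\|_{\bm A_t^{-1}} .
\]
The strict bound above then shows this quantity is strictly positive. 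Since $\hat i_t^\star=i^\star$, this is exactly \eqref{eq:stopping}.

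The only delicate point is this strictness; it is handled by the closed form of the infimum. Otherwise the argument is direct, with the factor $1/2$ split evenly between the estimation error at $\bm\theta^\star$ and the width of the ellipsoid around $\hat{\bm\theta}_t$.
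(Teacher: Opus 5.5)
Your proof is correct and follows essentially the same route as the paper. You lower-bound $\bm x_{i^\star,j}^\top\hat{\bm\theta}_t$ by $\Delta_j-\sqrt{\beta_t(\delta)}\,\|\bm x_{i^\star,j}\|_{\bm A_t^{-1}}$ using $\bm\theta^\star\in\mathcal E_t(\delta)$, and compare it with the closed-form infimum $\bm x_{i^\star,j}^\top\hat{\bm\theta}_t-\sqrt{\beta_t(\delta)}\,\|\bm x_{i^\star,j}\|_{\bm A_t^{-1}}$ over the ellipsoid. Your explicit check that $\hat i_t^\star=i^\star$, which the paper uses silently, and your care with strict inequalities are welcome additions, though strictness is automatic anyway since $\mathcal E_t(\delta)$ is compact and the infimum is attained.
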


\begin{proof}
The stopping rule is satisfied when
\[
    \langle \bm x_{i^\star,j},\hat{\bm\theta}_t\rangle
    \ge
    \sqrt{\beta_t(\delta)}\,
    \|\bm x_{i^\star,j}\|_{\bm A_t^{-1}},
    \qquad
    \forall j\neq i^\star.
\]
On the event $\{\bm\theta^\star\in\mathcal E_t(\delta)\}$,
\begin{align*}
    \langle \bm x_{i^\star,j},\hat{\bm\theta}_t\rangle
    &\ge
    \langle \bm x_{i^\star,j},\bm\theta^\star\rangle
    -\sqrt{\beta_t(\delta)}\,\|\bm x_{i^\star,j}\|_{\bm A_t^{-1}} \\
    &=
    \Delta_j
    -\sqrt{\beta_t(\delta)}\,\|\bm x_{i^\star,j}\|_{\bm A_t^{-1}} .
\end{align*}
Therefore the stopping rule holds whenever
\[
    \Delta_j
    -\sqrt{\beta_t(\delta)}\,\|\bm x_{i^\star,j}\|_{\bm A_t^{-1}}
    >
    \sqrt{\beta_t(\delta)}\,\|\bm x_{i^\star,j}\|_{\bm A_t^{-1}},
    \qquad
    \forall j\neq i^\star,
\]
which is exactly~\eqref{eq:suff-stop-hp}.
\end{proof}

\begin{proof}[Proof of~\Cref{thm:stopping-time}]
By~\Cref{lemma:asym-tracking}, there exists an a.s.\ finite time $t_0$ such that, for all $n_t\ge t_0$ and every $i\neq i^\star$,
\[
    n_t\|\bm x_{i^\star,i}\|_{\bm A_t^{-1}}^2
    \le
    2\|\bm x_{i^\star,i}\|_{\bm A(\bm w^\star,\bm\theta^\star)^{-1}}^2.
\]
Equivalently,
\[
    \|\bm x_{i^\star,i}\|_{\bm A_t^{-1}}
    \le
    \sqrt{\frac{2}{n_t}}\,
    \|\bm x_{i^\star,i}\|_{\bm A(\bm w^\star,\bm\theta^\star)^{-1}} .
\]
Combining this with~\Cref{lemma:suff-stop}, it suffices that
\[
    \sqrt{\beta_t(\delta)}
    \sqrt{\frac{2}{n_t}}\,
    \|\bm x_{i^\star,i}\|_{\bm A(\bm w^\star,\bm\theta^\star)^{-1}}
    <
    \frac{\Delta_{\min}}{2},
\]
or, equivalently,
\[
    n_t
    >
    8\,\beta_t(\delta)
    \frac{\|\bm x_{i^\star,i}\|_{\bm A(\bm w^\star,\bm\theta^\star)^{-1}}^2}{\Delta_{\min}^2}
    =
    8\,\beta_t(\delta)T^\star(\bm\theta^\star).
\]

Let $m_t\triangleq t-n_t$ be the number of forced exploration rounds up to time $t$. Then
\[
    \E[m_t]
    =
    \sum_{s=1}^{t}s^{-\alpha}
    \approx
    \frac{t^{1-\alpha}}{1-\alpha},
    \qquad \alpha\in(0,1).
\]
Since $m_t+n_t=t$, we have $m_t/n_t\to 0$ and hence $t<2n_t$ for all sufficiently large $t$.
Using $\beta_t(\delta)\le c_0\log(1/\delta)+c_1d\log(1+t)$, it is therefore enough that
\[
    n_t
    >
    8c_0T^\star(\bm\theta^\star)\log\frac{1}{\delta}
    +
    8c_1T^\star(\bm\theta^\star)d\log(1+n_t).
\]
A standard fixed-point argument gives that, for a constant $C>0$ depending only on $c_0,c_1$, any
\[
    n_t
    >
    C T^\star(\bm\theta^\star)
    \left(
        \log\frac{1}{\delta}
        + d\log T^\star(\bm\theta^\star)
        + d\log\log\frac{e}{\delta}
    \right)
    + O(1)
\]
satisfies the previous display.
The negligible exploration factor is absorbed into the constant $C$.

Since $\Pr[\bm\theta^\star\in\mathcal E_t(\delta)]\ge 1-\delta$, the claimed bound on $\tau$ holds with probability at least $1-\delta$.
\end{proof}

\subsection{\texorpdfstring{Proof of~\Cref{thm:stopping-time-Cost}}{Proof of the cost bound}}

\begin{proof}
The proof follows the same stopping argument as~\Cref{thm:stopping-time}, with the per-round design replaced by a cost-normalized design.
Let $\bm p^\star \in {\argmin}_{\bm p\ge 0:\langle \bm c,\bm p\rangle=1}
\max_{i\neq i^\star}\|\bm x_{i^\star,i}\|^2_{\bm A(\bm p,\bm\theta^\star)^{-1}}$.
Write
\[
    P^\star \triangleq \sum_{a\in\mathcal A}p_a^\star,
    \qquad
    \bm w^\star \triangleq \frac{\bm p^\star}{P^\star}\in\Delta_{\mathcal A}.
\]
If the round-based tracking rule tracks $\bm w^\star$, then the average cost per tracking round converges to
\[
    \langle \bm c,\bm w^\star\rangle
    =
    \frac{\langle \bm c,\bm p^\star\rangle}{P^\star}
    =
    \frac{1}{P^\star}.
\]
Consequently, if $B_t^{\mathcal T}$ denotes the cost spent on tracking rounds, then
$B_t^{\mathcal T}/n_t\to 1/P^\star$.
Since $\bm A(\cdot,\bm\theta)$ is linear in the design weights,
\[
    \bm A(\bm w^\star,\bm\theta^\star)
    =
    \frac{1}{P^\star}\bm A(\bm p^\star,\bm\theta^\star).
\]
Combining this identity with the tracking statement in~\Cref{lemma:asym-tracking} yields the cost-normalized information lower bound
\[
    \frac{\bm A_t}{B_t}
    \gtrsim
    \bm A(\bm p^\star,\bm\theta^\star)
    \qquad \text{a.s.},
\]
where the vanishing forced-exploration rounds are absorbed into the asymptotic inequality because the action costs are finite and strictly positive on the finite action set.

On the event $\{\bm\theta^\star\in\mathcal E_t(\delta)\}$, \Cref{lemma:suff-stop} shows that the stopping rule is satisfied whenever, for all $i\neq i^\star$,
\[
    \sqrt{\beta_t(\delta)}
    \,\|\bm x_{i^\star,i}\|_{\bm A_t^{-1}}
    <
    \frac{\Delta_{\min}}{2}.
\]
By the cost-normalized information bound, for all sufficiently large $t$,
\[
    \|\bm x_{i^\star,i}\|_{\bm A_t^{-1}}
    \le
    \sqrt{\frac{2}{B_t}}\,
    \|\bm x_{i^\star,i}\|_{\bm A(\bm p^\star,\bm\theta^\star)^{-1}} .
\]
Thus it is sufficient that
\[
    B_t
    >
    8\,\beta_t(\delta)
    \frac{
        \|\bm x_{i^\star,i}\|^2_{\bm A(\bm p^\star,\bm\theta^\star)^{-1}}
    }{
        \Delta_{\min}^2
    }
    \le
    8\,\beta_t(\delta)T_{\mathrm{cost}}^\star(\bm\theta^\star).
\]
Let $c_{\min}\triangleq\min_{a\in\mathcal A}c_a>0$.
Since $B_t\ge c_{\min}t$, the logarithmic term in $\beta_t(\delta)$ can be bounded by a logarithm of $B_t$ up to constants depending only on the cost vector.
Using $\beta_t(\delta)\le c_0\log(1/\delta)+c_1d\log(1+t)$ and the same fixed-point argument as in the proof of~\Cref{thm:stopping-time}, it is enough that
\[
    B_t
    >
    C T_{\mathrm{cost}}^\star(\bm\theta^\star)
    \left(
        \log\frac{1}{\delta}
        + d\log T_{\mathrm{cost}}^\star(\bm\theta^\star)
        + d\log\log\frac{e}{\delta}
    \right)
    + O(1),
\]
for a constant $C>0$ depending only on the confidence-radius constants and the bounded cost vector.
Therefore, with probability at least $1-\delta$,
\[
    B_\tau \lesssim
    T_{\mathrm{cost}}^\star(\bm\theta^\star)
    \left(
        \log\frac{1}{\delta}
        + d\log T_{\mathrm{cost}}^\star(\bm\theta^\star)
        + d\log\log\frac{e}{\delta}
    \right),
\]
as claimed.
\end{proof}

\section{Additional Discussions}
\label{sec:discussion}

This section places our results in the broader landscape of fixed-confidence best-arm identification (BAI), with emphasis on how the information geometry changes from stochastic and linear bandits to generalized linear models (GLMs).

\subsection{A Common Lower-Bound Principle}
\label{sec:lower-bound}

Across pure-exploration bandit models, fixed-confidence lower bounds are based on the same change-of-measure principle: a $\delta$-correct algorithm must collect enough evidence to distinguish the true instance from every alternative instance under which a different arm is optimal.
Let $\mathcal P^\star$ denote the true instance and let $\mathrm{Alt}(\mathcal P^\star)$ be the set of alternatives with a different best arm.
For any $\mathcal P' \in \mathrm{Alt}(\mathcal P^\star)$, a necessary condition takes the form
\begin{equation}
    \sum_{x\in\mathcal X}
    \E_{\mathcal P^\star}[N_x(\tau)]\,
    \mathrm{KL}(P_x^\star\,\|\,P_x')
    \;\gtrsim\;
    \log\!\left(\frac{1}{\delta}\right),
    \label{eq:transport-general}
\end{equation}
where $P_x^\star$ and $P_x'$ are the observation laws for query $x$ under the true and alternative instances.

The role of~\eqref{eq:transport-general} is conceptually simple: every incorrect hypothesis must receive enough statistical evidence against it.
What differs across models is the structure of the per-sample information term $\mathrm{KL}(P_x^\star\,\|\,P_x')$.
When this term has a tractable quadratic form, lower bounds can often be written as finite-dimensional experimental-design problems.
When the term depends nonlinearly on an unknown global parameter, as in GLMs, the same principle remains valid but becomes much harder to evaluate and optimize.

We now instantiate this principle for the hybrid GLM model.
Following the notation in \Cref{sec:problem_formulation}, $m(a)$ denotes the feedback modality of action $a\in\mathcal A$.
For two parameters $\bm\theta,\bm\lambda\in\Theta$, let
\[
    d_a(\bm\theta,\bm\lambda)
    \triangleq
    \mathrm{KL}\!\left(
        p_{m(a)}(\cdot\mid \bm x_a^\top\bm\theta)
        \,\middle\|\,
        p_{m(a)}(\cdot\mid \bm x_a^\top\bm\lambda)
    \right).
\]
By the exponential-family form in~\eqref{eq:pdf},
\[
    d_a(\bm\theta,\bm\lambda)
    =
    \frac{
        b_{m(a)}(\bm x_a^\top\bm\lambda)
        -
        b_{m(a)}(\bm x_a^\top\bm\theta)
        -
        \mu_{m(a)}(\bm x_a^\top\bm\theta)
        \bm x_a^\top(\bm\lambda-\bm\theta)
    }{
        \zeta_{m(a)}(\varphi_{m(a)})
    }.
\]
Let
\[
    \mathrm{Alt}(\bm\theta^\star)
    \triangleq
    \{\bm\lambda\in\Theta:
        {\argmax}_{i\in\mathcal K}\bm x_i^\top\bm\lambda
        \neq i^\star\}
\]
be the set of parameters under which the best arm is not $i^\star$, with ties broken arbitrarily using the same rule as the recommendation rule.

\begin{theorem}[Information-theoretic lower bound for hybrid GLM BAI]
\label{thm:hybrid-glm-lower}
For any $\delta$-correct algorithm and any $\bm\theta^\star\in\Theta$ with a unique best arm,
\[
    \mathbb E_{\bm\theta^\star}[\tau]
    \ge
    \frac{\mathrm{kl}(\delta,1-\delta)}
    {
        \displaystyle
        \sup_{\bm w\in\Delta_{\mathcal A}}
        \inf_{\bm\lambda\in\mathrm{Alt}(\bm\theta^\star)}
        \sum_{a\in\mathcal A}w_a d_a(\bm\theta^\star,\bm\lambda)
    },
\]
where $\mathrm{kl}(p,q)$ denotes the binary relative entropy.
Consequently, as $\delta\to0$,
\[
    \liminf_{\delta\to0}
    \frac{\mathbb E_{\bm\theta^\star}[\tau]}{\log(1/\delta)}
    \ge
    T_{\mathrm{LB}}(\bm\theta^\star),
    \qquad
    T_{\mathrm{LB}}(\bm\theta^\star)
    \triangleq
    \left[
        \sup_{\bm w\in\Delta_{\mathcal A}}
        \inf_{\bm\lambda\in\mathrm{Alt}(\bm\theta^\star)}
        \sum_{a\in\mathcal A}w_a d_a(\bm\theta^\star,\bm\lambda)
    \right]^{-1}.
\]
\end{theorem}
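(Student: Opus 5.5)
This lower bound follows the standard change-of-measure argument of \citet{kaufmann2016complexity} and \citet{garivier2016optimal}, specialised to the hybrid action space $\mathcal A=\mathcal K\cup\mathcal G$.

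\textbf{Step 1 (transportation inequality).} Fix a $\delta$-correct algorithm with $\mathbb E_{\bm\theta^\star}[\tau]<\infty$; otherwise there is nothing to prove. Fix any $\bm\lambda\in\mathrm{Alt}(\bm\theta^\star)$. The log-likelihood ratio of the observations up to $\tau$ decomposes as a sum over rounds. The round-$s$ term depends only on the law $p_{m(a_s)}(\cdot\mid\bm x_{a_s}^\top\cdot)$, and the action $a_s$ is chosen predictably. Wald's identity, applied per action, then gives
\[
\mathbb E_{\bm\theta^\star}[\log \mathrm{LR}_\tau]=\sum_{a\in\mathcal A}\mathbb E_{\bm\theta^\star}[N_a(\tau)]\,d_a(\bm\theta^\star,\bm\lambda).
\]
Here $N_a(\tau)$ counts every query of $a$, whether made during tracking, exploration or warm-up. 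The two modalities enter only through their own KL terms $d_a$, so heterogeneity causes no difficulty.

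\textbf{Step 2 (data-processing).} Apply the data-processing inequality to the event $\{\hat i^\star_\tau=i^\star\}$, which lies in $\mathcal F_\tau$. Under $\bm\theta^\star$ this event has probability at least $1-\delta$. Under $\bm\lambda$ it has probability at most $\delta$, because $i^\star$ is not the best arm under $\bm\lambda$ and the algorithm is $\delta$-correct. Monotonicity of the binary relative entropy then yields
\[
\sum_a \mathbb E[N_a(\tau)]\,d_a(\bm\theta^\star,\bm\lambda)\ge \mathrm{kl}(\delta,1-\delta).
\]
This is exactly \eqref{eq:transport-general}.

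\textbf{Step 3 (optimise).} Take the infimum over $\bm\lambda\in\mathrm{Alt}(\bm\theta^\star)$ and write $w_a=\mathbb E[N_a(\tau)]/\mathbb E[\tau]\in\Delta_{\mathcal A}$. This gives
\[
\mathrm{kl}(\delta,1-\delta)\le \mathbb E[\tau]\inf_{\bm\lambda}\sum_a w_a d_a\le \mathbb E[\tau]\sup_{\bm w}\inf_{\bm\lambda}\sum_a w_a d_a.
\]
Rearranging gives the first claim. The denominator is positive: uniqueness of $i^\star$ together with the spanning of $\mathbb R^d$ by the arms and the positive curvature in Assumption~\ref{asp:curvature} make $d_a(\bm\theta^\star,\bm\lambda)>0$ for some $a$ whenever $\bm\lambda\ne\bm\theta^\star$. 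For the asymptotic statement, use $\mathrm{kl}(\delta,1-\delta)\ge\log\frac{1}{2.4\delta}$, divide by $\log(1/\delta)$ and let $\delta\to0$.

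\textbf{Main obstacle.} The delicate point is the tie-breaking in $\mathrm{Alt}$, that is, ensuring that $\bm\lambda$ really makes $\{\hat i^\star_\tau=i^\star\}$ a failure event. Alternatives with ties sit on the boundary, so I would restrict the infimum to parameters with a unique best arm different from $i^\star$. This restriction does not change the infimum, by continuity of $d_a$ in $\bm\lambda$ and because $\Theta$ is convex with nonempty interior near such points. A second point to check is that Wald's identity needs $d_a(\bm\theta^\star,\bm\lambda)<\infty$; this holds because $b_{m(a)}$ is finite on the bounded natural parameter range.
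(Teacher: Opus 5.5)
Your proposal is correct and is essentially the paper's proof. The paper cites the transportation lemma of \citet{kaufmann2016complexity} for the event $\{\hat i^\star_\tau=i^\star\}$, which is exactly what your Steps~1--2 (Wald's identity plus data processing) reprove, and then, as in your Step~3, it sets $w_a=\mathbb E_{\bm\theta^\star}[N_a(\tau)]/\mathbb E_{\bm\theta^\star}[\tau]$, takes the supremum over designs, and uses $\mathrm{kl}(\delta,1-\delta)\sim\log(1/\delta)$. Your remark on ties is extra care: the paper handles ties only through the tie-breaking convention built into $\mathrm{Alt}(\bm\theta^\star)$. Note, however, that your claim that restricting to strict alternatives leaves the infimum unchanged needs a mild nondegeneracy (tied points of $\mathrm{Alt}(\bm\theta^\star)$ must be limits of strict alternatives inside $\Theta$). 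The monotonicity step in Step~2 also implicitly requires $\delta\le 1/2$, which both arguments use without stating it.
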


\begin{proof}
If $\mathbb E_{\bm\theta^\star}[\tau]=+\infty$, the claim is trivial. Hence assume $\mathbb E_{\bm\theta^\star}[\tau]<+\infty$.
Fix an alternative parameter $\bm\lambda\in\mathrm{Alt}(\bm\theta^\star)$.
By the transportation lemma for $\delta$-correct pure-exploration algorithms~\citep{kaufmann2016complexity}, applied to the event that the algorithm recommends $i^\star$, we have
\[
    \sum_{a\in\mathcal A}
    \mathbb E_{\bm\theta^\star}[N_a(\tau)]
    d_a(\bm\theta^\star,\bm\lambda)
    \ge
    \mathrm{kl}(1-\delta,\delta)
    =
    \mathrm{kl}(\delta,1-\delta).
\]
Let $n=\mathbb E_{\bm\theta^\star}[\tau]$ and
$w_a=\mathbb E_{\bm\theta^\star}[N_a(\tau)]/n$.
Then $\bm w\in\Delta_{\mathcal A}$ and, since the last display holds for every alternative,
\[
    n
    \inf_{\bm\lambda\in\mathrm{Alt}(\bm\theta^\star)}
    \sum_{a\in\mathcal A} w_a d_a(\bm\theta^\star,\bm\lambda)
    \ge
    \mathrm{kl}(\delta,1-\delta).
\]
The denominator is upper bounded by the supremum over all designs $\bm w\in\Delta_{\mathcal A}$, which gives the claimed finite-$\delta$ bound.
The asymptotic statement follows from
$\mathrm{kl}(\delta,1-\delta)\sim\log(1/\delta)$.
\end{proof}

This lower bound is exact at the level of KL information, but it is generally nonlinear in the alternative parameter $\bm\lambda$.
A local quadratic form is obtained by expanding the KL divergence around $\bm\theta^\star$.
Define the local Fisher version of the hybrid design matrix as
\[
    \bm A_{\mathrm F}(\bm w,\bm\theta^\star)
    \triangleq
    \sum_{a\in\mathcal A}
    w_a
    \frac{\mu_{m(a)}'(\bm x_a^\top\bm\theta^\star)}
    {\zeta_{m(a)}(\varphi_{m(a)})}
    \bm x_a\bm x_a^\top .
\]
For local alternatives,
\[
    \sum_{a\in\mathcal A}w_a d_a(\bm\theta^\star,\bm\lambda)
    =
    \frac12
    \|\bm\lambda-\bm\theta^\star\|^2_{\bm A_{\mathrm F}(\bm w,\bm\theta^\star)}
    +
    o(\|\bm\lambda-\bm\theta^\star\|^2).
\]
If $\bm\theta^\star$ is an interior point of $\Theta$ and the closest alternatives are obtained by crossing one decision boundary
$\bm x_{i^\star,i}^\top\bm\lambda\le0$, the quadratic relaxation gives
\[
    \inf_{\bm\lambda:\,\bm x_{i^\star,i}^\top\bm\lambda\le0}
    \frac12
    \|\bm\lambda-\bm\theta^\star\|^2_{\bm A_{\mathrm F}(\bm w,\bm\theta^\star)}
    =
    \frac{\Delta_i^2}
    {2\|\bm x_{i^\star,i}\|^2_{\bm A_{\mathrm F}(\bm w,\bm\theta^\star)^{-1}}},
\]
where $\Delta_i=\bm x_{i^\star,i}^\top\bm\theta^\star$.
Thus the local oracle characteristic time is
\[
    T_{\mathrm{loc}}(\bm\theta^\star)
    =
    2
    \inf_{\bm w\in\Delta_{\mathcal A}}
    \max_{i\neq i^\star}
    \frac{
        \|\bm x_{i^\star,i}\|^2_{\bm A_{\mathrm F}(\bm w,\bm\theta^\star)^{-1}}
    }{
        \Delta_i^2
    }.
\]
This expression has the same experimental-design structure as the characteristic time used by HyTS-GLB, but with the true Fisher information matrix and arm-dependent gaps.
Thus, the lower-bound discussion should be interpreted as a local comparison rather than a full instance-optimality result for hybrid GLMs.
It shows that the confidence-induced design tracked by HyTS-GLB is aligned with the minimax structure suggested by the local KL lower bound, while the matrix $\bm A(\bm w,\bm\theta^\star)$ in our upper bound is a conservative self-concordant lower approximation of $\bm A_{\mathrm F}(\bm w,\bm\theta^\star)$.
This explains why our stopping-time guarantee carries conservative constants and logarithmic factors.

The oracle design induced by $T_{\mathrm{loc}}(\bm\theta^\star)$ is not directly implementable in the GLM setting. The main obstruction, compared with homoscedastic linear bandits, is that the information matrix itself depends on the unknown parameter through the local curvatures $\mu_{m(a)}'(\bm x_a^\top\bm\theta^\star)$. In linear Gaussian bandits this curvature is constant, so the design geometry is fixed by the known features and can be separated from parameter estimation. In hybrid GLMs, by contrast, the most informative actions depend on where $\bm\theta^\star$ lies through both the reward and dueling links, in addition to the unknown best arm and gaps $\Delta_i$. These quantities are precisely what the learner must infer during the best-arm-identification process. HyTS-GLB therefore uses a plug-in, confidence-induced minimax design based on the current estimate $\hat{\bm\theta}_t$ and tracks the corresponding allocation online. This makes the sampling rule implementable while preserving fixed-confidence correctness through the time-uniform confidence sequence.

\subsection{Comparisons and Extensions}

\textbf{Relation to~\citet{lee2024unified}.}

Our confidence construction builds on the self-concordant likelihood-ratio analysis of~\citet{lee2024unified}, but adapts it to a heterogeneous feedback model.
In their setting, observations are generated from a single generalized linear model.
In contrast, our model combines reward and dueling observations, each with its own link function, dispersion scale, and curvature contribution.
The resulting confidence ellipsoid therefore aggregates two modality-specific Hessian terms, weighted by the realized sampling frequencies.
This distinction is important: the confidence set does not merely pool observations, but preserves the information geometry of each feedback channel.

\textbf{Relation to~\citet{jun2021improved}.}

The work of~\citet{jun2021improved} also studies best-arm identification with both reward and comparison feedback, but under a more specialized logistic structure.
Their analysis treats the feedback through a logistic-bandit model and develops curvature-aware confidence bounds to control the difficulty caused by vanishing Fisher information.
Our formulation is broader in two respects.
First, it allows reward and dueling feedback to follow potentially different GLM observation models.
Second, it combines the two modalities through a unified likelihood-ratio confidence sequence rather than reducing the problem to a single logistic likelihood.
Thus, the two approaches are complementary: RAGE-GLM provides a refined algorithmic treatment for logistic-type models, while our framework emphasizes a general mechanism for fusing heterogeneous GLM feedback.


\textbf{Possible extensions.}

The proposed framework is modular.
The confidence sequence can be paired with allocation rules other than the minimax-optimal tracking rule used in HyTS-GLB, including variants inspired by RAGE-GLM~\citep{jun2021improved} or by future instance-optimal GLM designs.
It can also accommodate cost-aware sampling, as discussed in the main text, by replacing per-round information with cost-normalized information.

Another promising direction is to sharpen the connection between our design criterion and information-theoretic lower bounds for general GLM BAI.
Recent work such as Log-TS~\citep{rivera2024near} suggests that approximate instance-specific objectives may be tractable in logistic models.
However, the effect of the approximation and its interaction with hybrid feedback require further investigation.
Our framework is flexible enough to incorporate such allocation rules once their guarantees are established in the heterogeneous-feedback setting.
Overall, this suggests a path toward more refined hybrid BAI algorithms without changing the underlying confidence-sequence machinery.

\subsection{Limitations}
\label{sec:limitation}
Our results rely on several structural assumptions that are standard in generalized linear bandit analysis but may be restrictive in some applications.
The confidence construction assumes correctly specified GLM observation models, bounded parameters and features, self-concordance, and a curvature lower bound on the relevant domain.
These conditions ensure that the likelihood-ratio confidence sequence can be converted into an explicit ellipsoid and that each exploration action provides non-vanishing information.
When feedback is misspecified, heavy-tailed, strategically biased, or generated from non-stationary users, additional robustness tools would be needed.

The optimality guarantees should also be interpreted with respect to the confidence-width design criterion used in HyTS-GLB.
For general GLMs, the exact information-theoretic lower bound is nonlinear in the alternative parameter and is difficult to optimize globally.
We therefore connect our design to a local quadratic relaxation of the KL lower bound, while the finite-time stopping guarantee carries conservative constants and logarithmic factors.
Closing this gap and obtaining fully instance-optimal hybrid GLM BAI algorithms remain open problems.

From a computational perspective, the algorithm requires repeated MLE updates and repeated solutions of a minimax design problem over the enlarged action space $\mathcal A=\mathcal K\cup\mathcal G$.
Since $|\mathcal G|=O(K^2)$, exact optimization can become expensive for large arm sets.
Our experiments use Frank--Wolfe updates to make the design computation practical, but scaling to very large candidate sets may require pair screening, lazy updates, or approximate design oracles.

Finally, the empirical evaluation is synthetic and focuses on logistic observation models.
The experiments are intended to isolate the information-geometric behavior of hybrid feedback rather than to validate deployment in a specific application.
Real interactive systems may involve changing feedback costs, contextual effects, abstentions, correlated comparisons, or richer preference signals; extending the theory and experiments to these settings is an important direction for future work.

\section{Additional Experiments}
\label{sec:additional-experiments}

We present two additional synthetic experiments that complement the random-arm results in~\Cref{sec:exp}.
All experiments use the same logistic observation model, confidence level, implementation of the stopping rule, and Frank--Wolfe design solver as in the main text.
The goal is to test whether the proposed hybrid allocation remains competitive beyond the main random construction, and to illustrate how its behavior changes with the geometry and scale of the instance.

\textbf{Basis-plus-rotated-arm construction.}
In \Cref{fig:appendix_additional_experiments}(a), the arm set is
\[
    \mathcal{X} = \{ \bm e_1, \bm e_2, \ldots, \bm e_d, \cos(0.1)\bm e_1 + \sin(0.1)\bm e_2 \},
\]
with $d \in \{2,3,4,5,6,7,8\}$ and $S=5$.
This instance creates a hard comparison between the best basis arm and a nearby rotated arm.
The relevant separating direction has a small component along $\bm e_1-\cos(0.1)\bm e_1$ and a larger component along $\bm e_2$, so reward queries on basis arms can be highly informative even when the final decision is pairwise.
We do not report the dueling-only variant because it required more than $20,\!000,\!000$ samples in this construction, exceeding our computational limit.
As shown in \Cref{fig:appendix_additional_experiments}(a), HyTS-GLB remains close to the best-performing baseline across dimensions and improves over Rage-GLM and the random hybrid baseline in most settings.
This indicates that the hybrid design does not rely on the random geometry used in the main experiment.

\textbf{Effect of the parameter scale.}
In \Cref{fig:appendix_additional_experiments}(b), we return to the random-arm construction from the main text, fix $(K,d)=(5,4)$, and vary $S \in \{2,3,4,5\}$.
Changing $S$ changes the magnitude of the linear scores and therefore the local curvature of the logistic model.
This is a useful stress test for GLM bandits because saturated regions can make some observations much less informative.
HyTS-GLB achieves the smallest mean stopping time for all tested values of $S$, with the largest relative gains at larger $S$.
The result suggests that adaptively combining absolute and relative observations is particularly helpful when the curvature varies across actions and feedback types.

Overall, these additional experiments support the conclusions of~\Cref{sec:exp}: HyTS-GLB is competitive with specialized logistic-bandit baselines on structured arm sets and can outperform them on random arm sets, despite being designed for the more general hybrid GLM setting.

\begin{figure*}[t]
\centering

\begin{subfigure}[t]{0.49\linewidth}
\centering
\begin{tikzpicture}
\begin{axis}[
    width=\linewidth,
    height=4.2cm,
    xlabel={\small Dimension},
    ylabel={\small Stopping Time},
    tick align=outside,
    tick pos=left,
    xtick={2,3,4,5,6,7,8},
    xticklabel style={font=\tiny},
    yticklabel style={font=\tiny},
    axis line style={black},
    ymode=log,
    legend pos=north west,
    legend style={
        fill opacity=0.5,
        draw opacity=1,
        text opacity=1,
        font=\fontsize{4}{5}\selectfont,
        row sep=0pt,
        inner xsep=2pt,
        inner ysep=1pt
    },
]
\addplot[
    color=myblue,
    thick,
    mark=triangle*,
    mark options={fill=white, draw=myblue}
] coordinates {
    (2, 319779)
    (3, 419683)
    (4, 532481)
    (5, 658010)
    (6, 808795)
    (7, 970401)
    (8, 1152400)
};
\addlegendentry{Rage-GLM}

\addplot[
    color=mybrown,
    thick,
    mark=o,
    mark options={fill=white, draw=mybrown}
] coordinates {
    (2, 236596)
    (3, 413142)
    (4, 638389)
    (5, 887452)
    (6, 1248900)
    (7, 1660560)
    (8, 2071110)
};
\addlegendentry{Random}

\addplot[
    color=mygreen,
    thick,
    mark=square*,
    mark options={fill=white, draw=mygreen}
] coordinates {
    (2, 184871.1)
    (3, 374428)
    (4, 393760.6)
    (5, 923570.7)
    (6, 1020480.2)
    (7, 1053986.5)
    (8, 1101550.9)
};
\addlegendentry{ReTS-GLB}

\addplot[
    color=myorange,
    thick,
    mark=*,
    mark options={fill=myorange, draw=myorange, fill opacity=0.7}
] coordinates {
    (2, 197555)
    (3, 380544)
    (4, 424101)
    (5, 910080)
    (6, 996588)
    (7, 1065170)
    (8, 1064530)
};
\addlegendentry{HyTS-GLB}
\end{axis}
\end{tikzpicture}
\caption{Basis\_Batch arm construction with $S=5$}
\end{subfigure}
\hfill
\begin{subfigure}[t]{0.49\linewidth}
\centering
\begin{tikzpicture}
\begin{axis}[
    width=\linewidth,
    height=4.2cm,
    xlabel={\small Parameter $S$},
    ylabel={\small Stopping Time},
    tick align=outside,
    tick pos=left,
    xtick={2,3,4,5},
    xticklabel style={font=\tiny},
    yticklabel style={font=\tiny},
    axis line style={black},
    ymode=log,
    legend pos=north east,
    legend style={
        fill opacity=0.5,
        draw opacity=1,
        text opacity=1,
        font=\fontsize{4}{5}\selectfont,
        row sep=0pt,
        inner xsep=2pt,
        inner ysep=1pt
    },
]
\addplot[
    color=myblue,
    thick,
    mark=triangle*,
    mark options={fill=white, draw=myblue}
] coordinates {
    (2, 178949)
    (3, 83168)
    (4, 143726)
    (5, 290517)
};
\addlegendentry{Rage-GLM}

\addplot[
    color=mygreen,
    thick,
    mark=square*,
    mark options={fill=white, draw=mygreen}
] coordinates {
    (2, 117963)
    (3, 47951.4)
    (4, 39903)
    (5, 44543.5)
};
\addlegendentry{ReTS-GLB}

\addplot[
    color=mybrown,
    thick,
    mark=o,
    mark options={fill=white, draw=mybrown}
] coordinates {
    (2, 83115.9)
    (3, 34715.6)
    (4, 26449.8)
    (5, 24647.3)
};
\addlegendentry{Random}

\addplot[
    color=mypurple,
    thick,
    mark=diamond*,
    mark options={fill=white, draw=mypurple}
] coordinates {
    (2, 80137.2)
    (3, 38294.9)
    (4, 28015.5)
    (5, 24507.2)
};
\addlegendentry{DuelTS-GLB}

\addplot[
    color=myorange,
    thick,
    mark=*,
    mark options={fill=myorange, draw=myorange, fill opacity=0.7}
] coordinates {
    (2, 66783.8)
    (3, 30280.2)
    (4, 21499.8)
    (5, 19294.4)
};
\addlegendentry{HyTS-GLB}
\end{axis}
\end{tikzpicture}
\caption{Random-arm construction with fixed $(K,d)=(5,4)$ and varying $S$}
\end{subfigure}

\caption{Additional experiments.}
\label{fig:appendix_additional_experiments}
\end{figure*}

\section{Concrete Complexity Comparisons}
\label{sec:symbolic-complexity}

We give two logistic examples that specialize the leading complexity terms to reward-only and dueling-only feedback.
Throughout this section, let $\mu_\cc=\mu_\dd=\sigma$, $\bm\theta^\star=(1,0)^\top$, and
\[
    B_\cc\triangleq 2(1+SM_\cc)\zeta_\cc(\varphi_\cc),
    \qquad
    B_\dd\triangleq 2(1+2SM_\dd)\zeta_\dd(\varphi_\dd).
\]
The leading sample complexity is of order
\[
    \widetilde O\!\left(T\log\frac1\delta\right),
\]
where $T$ is the corresponding characteristic term.

\subsection{Case 1: Dueling Feedback is More Informative}

Let $K=d=2$, $\bm x_1=\bm e_1$, and $\bm x_2=-\bm e_2$.
Then arm $1$ is optimal and
\[
    \bm g_{12}\triangleq \bm x_1-\bm x_2=(1,1)^\top,
    \qquad
    \Delta_{12}\triangleq \bm g_{12}^\top\bm\theta^\star=1.
\]

\textbf{Reward-only.}
Reward feedback estimates the two coordinates separately.
With reward weights $w_1+w_2=1$,
\[
    \bm A_{\rm R}(w_1,w_2)
    =
    \begin{pmatrix}
        \sigma'(1)w_1/B_\cc & 0\\
        0 & \sigma'(0)w_2/B_\cc
    \end{pmatrix}.
\]
Optimizing the uncertainty in direction $\bm g_{12}$ gives
\[
    T_{\rm R}
    =
    B_\cc
    \left(
        \sigma'(1)^{-1/2}
        +
        \sigma'(0)^{-1/2}
    \right)^2 .
\]
Since $\sigma'(1)$ and $\sigma'(0)$ are absolute constants, the reward-only sample complexity is
\[
    \widetilde O\!\left(
        B_\cc\log\frac1\delta
    \right).
\]

\textbf{Dueling-only.}
Dueling feedback can query the pair $(1,2)$, which directly observes the separating direction $\bm g_{12}$.
Thus
\[
    T_{\rm D}
    =
    \frac{B_\dd}{\sigma'(1)\Delta_{12}^2}
    =
    \frac{B_\dd}{\sigma'(1)}.
\]
The dueling-only sample complexity is therefore
\[
    \widetilde O\!\left(
        B_\dd\log\frac1\delta
    \right).
\]
When $B_\cc$ and $B_\dd$ are comparable, the dueling constant is smaller because it measures the only relevant direction in one query, whereas reward feedback must learn both coordinates.

\subsection{Case 2: Reward Feedback is More Informative}

Let $K=3$, $d=2$, and
\[
    \bm x_1=\bm e_1,\qquad
    \bm x_2=\bm e_2,\qquad
    \bm x_3=c\bm e_1+s\bm e_2,
    \qquad
    c=\cos(0.1),\quad s=\sin(0.1).
\]
Set $a\triangleq 1-c$.
Then arm $1$ is optimal, and the hard comparison is between arms $1$ and $3$:
\[
    \bm g_{13}\triangleq \bm x_1-\bm x_3=(a,-s)^\top,
    \qquad
    \Delta_{13}\triangleq \bm g_{13}^\top\bm\theta^\star=a.
\]
Here $a=1-\cos(0.1)$ is much smaller than $s=\sin(0.1)$, so the hard direction is dominated by its $\bm e_2$ component.

\textbf{Reward-only.}
Reward feedback can query arm $2$ and therefore estimate the dominant $\bm e_2$ coordinate directly.
Using reward queries only on arms $1$ and $2$ gives
\[
    \bm g_{13}^\top\bm A_{\rm R}(w_1,w_2)^{-1}\bm g_{13}
    =
    \frac{B_\cc a^2}{\sigma'(1)w_1}
    +
    \frac{B_\cc s^2}{\sigma'(0)w_2},
    \qquad
    w_1+w_2=1.
\]
Optimizing over $w_1,w_2$ yields
\[
    T_{\rm R}
    \le
    \frac{
        B_\cc
        \left(
            a\sigma'(1)^{-1/2}
            +
            s\sigma'(0)^{-1/2}
        \right)^2
    }{
        a^2
    }.
\]
Since the sigmoid derivatives are constants and $a\ll s$, this gives
\[
    T_{\rm R}
    =
    O\!\left(
        B_\cc\frac{s^2}{a^2}
    \right),
\]
and hence the reward-only sample complexity is
\[
    \widetilde O\!\left(
        B_\cc\frac{s^2}{a^2}\log\frac1\delta
    \right).
\]

\textbf{Dueling-only.}
Dueling feedback may query all three pairs.
In particular, the hard pair $(1,3)$ is available directly, and its curvature is $\sigma'(a)$.
This gives the characteristic term
\[
    T_{\rm D}
    =
    \frac{B_\dd}{\sigma'(a)a^2}.
\]
Since $a$ is small, $\sigma'(a)$ is an absolute constant, so the dueling-only sample complexity is
\[
    \widetilde O\!\left(
        B_\dd\frac{1}{a^2}\log\frac1\delta
    \right).
\]
Thus, when $B_\cc$ and $B_\dd$ are comparable,
\[
    \widetilde O\!\left(
        B_\cc\frac{s^2}{a^2}\log\frac1\delta
    \right)
    \quad\text{versus}\quad
    \widetilde O\!\left(
        B_\dd\frac{1}{a^2}\log\frac1\delta
    \right).
\]
The extra factor $s^2=\sin^2(0.1)$ makes reward feedback more informative in this instance.
The reason is that arm $2=\bm e_2$ directly probes the dominant coordinate of $\bm g_{13}$, while dueling feedback must resolve the small preference gap $a$ through pairwise differences.
Through these two simple examples, we see that reward and dueling feedback can be useful in different scenarios. Together with \Cref{fig:exp_combined}(b), these examples show that HyTS-GLB does not always commit to observing a single modality. Instead, it dynamically allocates samples between reward and dueling observations, thereby making fuller use of the available information.


\end{document}